\documentclass[research]{fcs}
\usepackage{layout}
\usepackage{subfig}
\usepackage{hyperref}
\usepackage{multirow}
\usepackage{algorithm}
\usepackage{algpseudocode}

\def\c{{\boldsymbol c}}

\def\d{{\boldsymbol d}}

\def\g{{\boldsymbol g}}

\def\l{{\boldsymbol l}}

\def\x{{\boldsymbol x}}

\def\QM{{\mathcal Q}}

\def\TM{{\mathcal T}}

\usepackage{array}
\newcommand{\revised}[1]{\textcolor[rgb]{0,0,0}{#1}}

\title{AOE: Exhaustive Out-of-Distribution Detection via Recalibrating Outlier Labels}

\author[1]{Fengqiang Wan}
\author[1]{Qing-Yuan Jiang}
\author[2]{Fu Shen}
\author[1,+]{Yang Yang}
\address[1]{School of Computer Science and Engineering, Nanjing University of Science and Technology, Nanjing 210094, China.}
\address[2]{Key Laboratory of Modern Agricultural Equipment, Ministry of Agriculture and Rural Affairs, P.R.China.}

\corremail{yyang@njust.edu.cn}
\fcssetup{
  doi = {10.1007/s11704-026-61080-0},
  received = {May 27, 2026},
  accepted = {July 3, 2026},
  paper-id = {261080},
}

\begin{abstract}
Out-of-distribution (OOD) detection is essential for deploying machine learning models in open-world and safety-critical scenarios, where test inputs may deviate from the training distribution and overconfident predictions on unknown samples can lead to unreliable decisions. Outlier Exposure (OE) has emerged as a promising OOD detection paradigm by introducing auxiliary outliers during training to enlarge the margin between in-distribution (ID) and OOD samples. Existing OE-based methods typically enlarge this margin by employing uniform labels to maximize the entropy of OOD samples over ID categories. However, we theoretically show that uniform labels inevitably disregard the relations between OOD samples and ID categories, termed the over-softening effect, leading to a suboptimal margin bound. Our theoretical analysis further reveals that explicitly exploiting such relations can instead yield improved OOD detection performance. Motivated by this insight, we propose \underline{A}daptive Confidence \underline{OE} (AOE), a simple yet effective method that leverages temperature scaling to recalibrate outlier labels. Specifically, AOE generates adaptive soft targets from temperature-scaled model predictions for OOD samples, where the learnable temperature smooths the prediction distribution without fully erasing class-wise relational information. By supervising OOD samples with these adaptive soft targets, AOE preserves the semantic proximity between OOD samples and ID categories while encouraging the softened targets to approach a high-entropy distribution, thereby suppressing overconfident OOD predictions and enlarging the separation margin. Extensive experiments across diverse benchmarks demonstrate the effectiveness of AOE. Compared with OE, AOE consistently lowers FPR95 by 2.40\% and 2.51\% on CIFAR-10, 0.64\% and 9.11\% on CIFAR-100, and 2.02\% and 2.10\% on ImageNet-200 under near-OOD and far-OOD settings, respectively.
\end{abstract}
\keywords{Out-of-distribution, outlier exposure, temperature scaling}

\begin{document}

\section{Introduction}

Machine learning models are typically predicated on the assumption that the test distribution coincides with that of the training distribution. However, this assumption often fails in real-world settings, where models are exposed to out-of-distribution (OOD) samples and suffer significant performance degradation, particularly in safety-critical applications~\cite{MSP:conf/iclr/HendrycksG17,Survey:journals/ijcv/YangZLL24, transferable_fcs_yang, ossl_fcs}. To address this challenge, OOD detection~\cite{DOE:conf/iclr/WangY0DKLH023,MSP:conf/iclr/HendrycksG17,C3OOD:conf/iclr/LeeLLS18,Energy:conf/nips/LiuWOL20,DOE:conf/iclr/WangY0DKLH023} has emerged as an effective solution. It aims to accurately recognize in-distribution (ID) samples while identifying OOD samples \cite{ASH:conf/iclr/DjurisicBAL23, dice:conf/eccv/SunL22a,DBLP:journals/pami/YangJXZ24}.

\begin{figure}[t] 
\begin{minipage}[b]{\linewidth}\centering
\includegraphics[width=.49\linewidth]{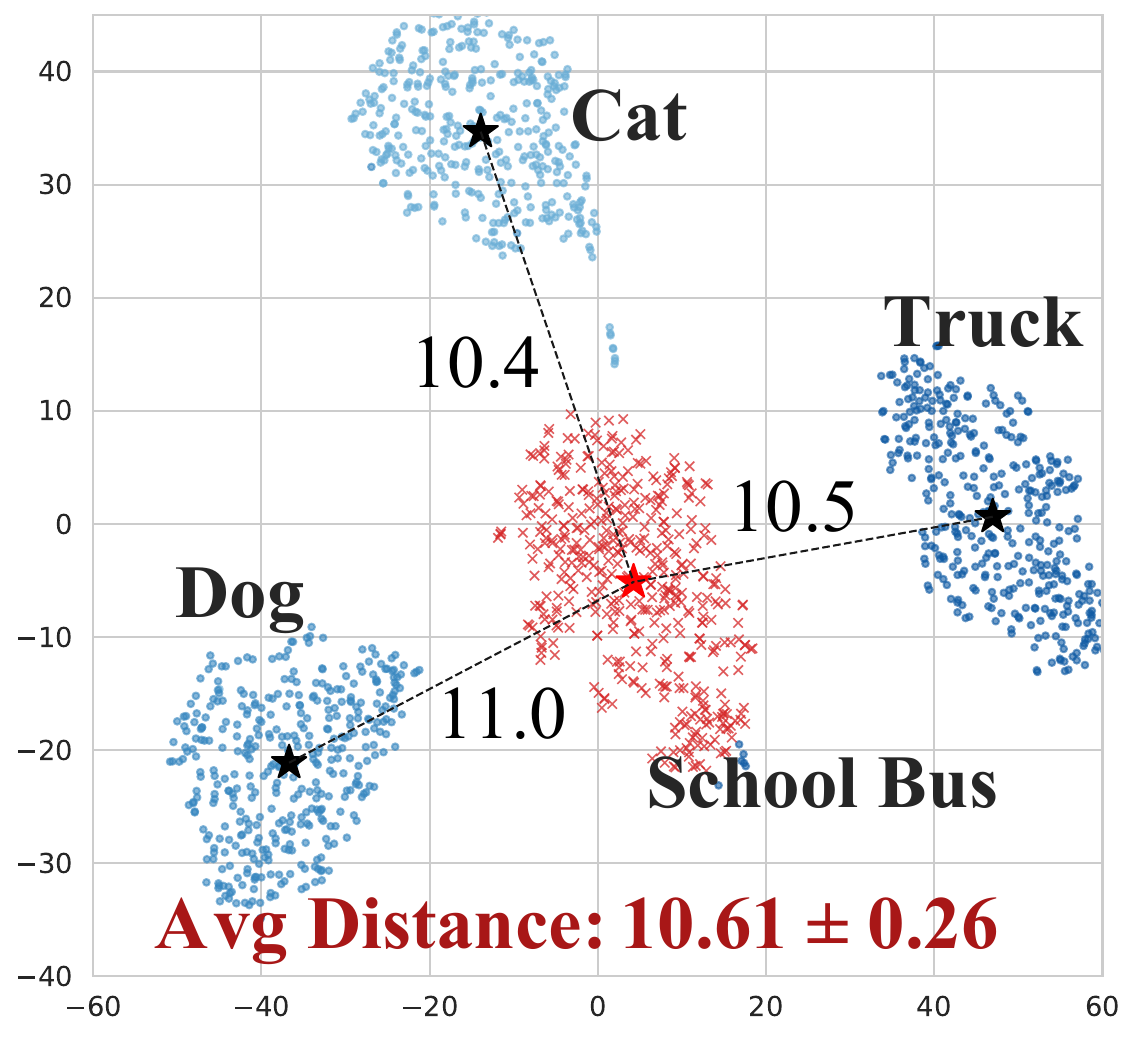}
\includegraphics[width=.49\linewidth]{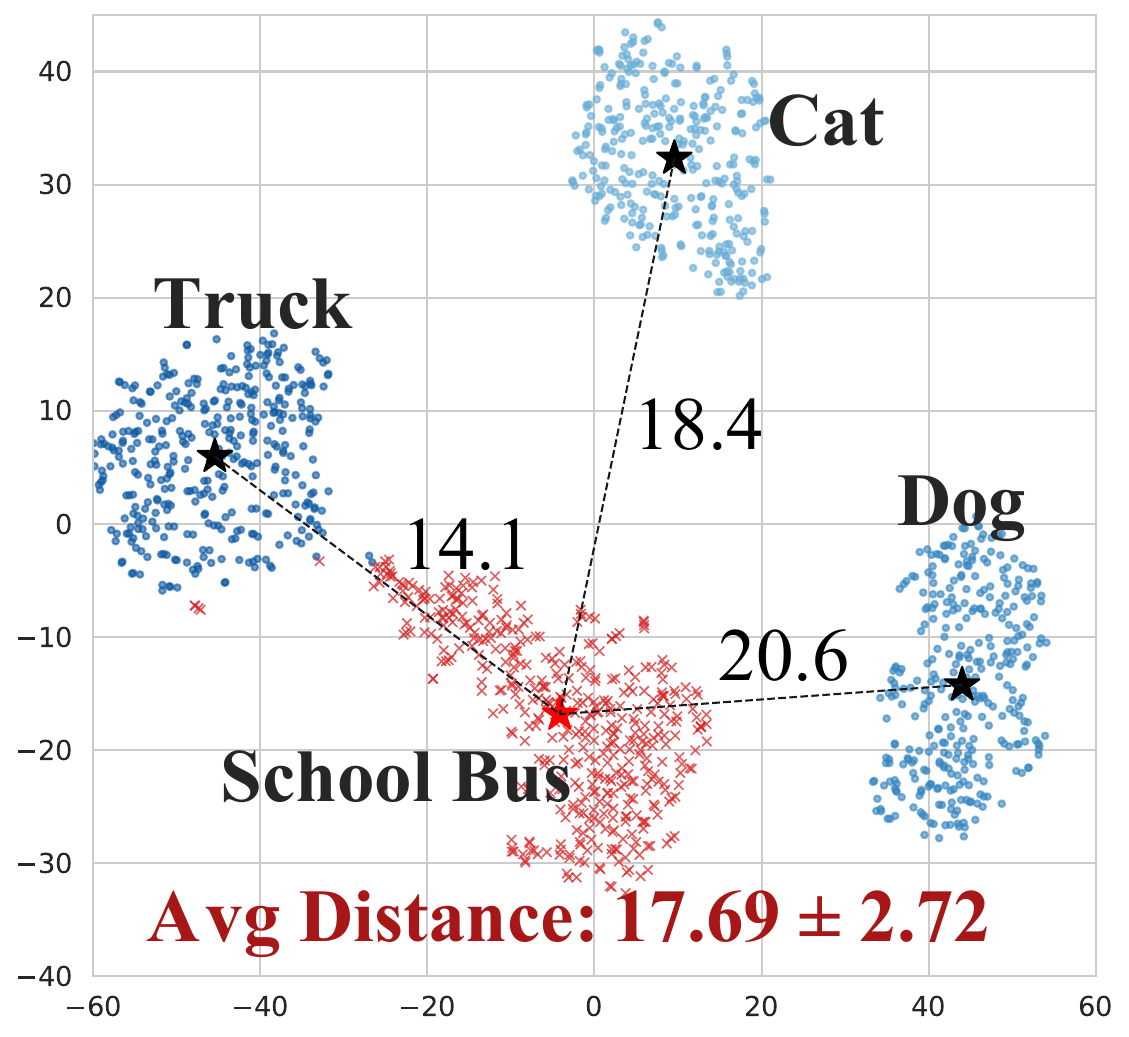}\\
{\small (a). $t$-SNE of baseline (left) and ours (right).}
\vspace{5pt}
\end{minipage}
\begin{minipage}[b]{\linewidth}\centering
\includegraphics[width=0.32\linewidth]{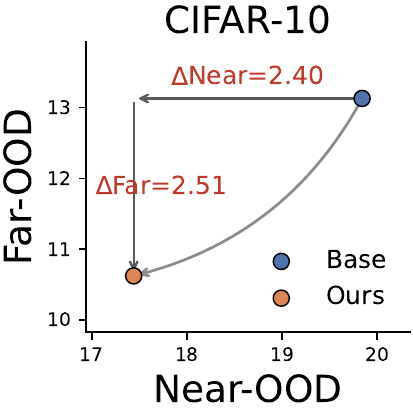}
\includegraphics[width=0.32\linewidth]{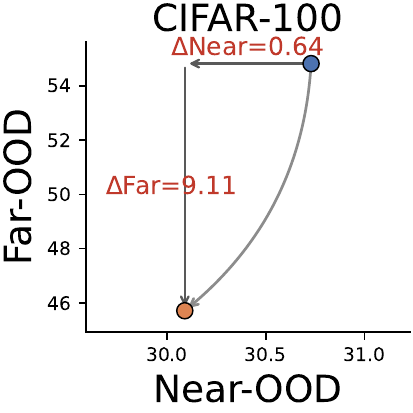}
\includegraphics[width=0.32\linewidth]{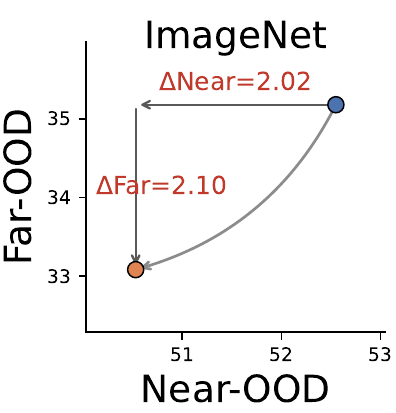} \\
{\small (b). OOD detection performance (FPR95).}
\end{minipage}
\caption{
(a) $t$-SNE visualization comparing the baseline (uniform labels) and our method (preserving relations) for the OOD category and ID categories. Compared with the baseline, our method exhibits a larger separation margin between ID and OOD samples. (b) The \textit{over-softening effect} is consistently observed on CIFAR-10, CIFAR-100, and ImageNet, where our method leads to improved OOD detection performance.}
\label{fig:intro}
\end{figure}

Pioneering works like MSP \cite{MSP:conf/iclr/HendrycksG17} and Energy~\cite{Energy:conf/nips/LiuWOL20} are designed to develop score functions that can effectively distinguish OOD samples from ID data. Nevertheless, the fixed parameters of these models impose a significant constraint on their adaptability and overall performance. To tackle this issue, a variety of training-time regularization methods~\cite{DBLP:conf/aaai/XuY26,LogitNorm:conf/icml/WeiXCF0L22,SSOD:conf/iclr/Pei24} such as LogitNorm~\cite{LogitNorm:conf/icml/WeiXCF0L22} have been introduced. Training-time regularization methods aim to modify learning strategies so as to enhance the discriminative capacity of learned representations for improved OOD detection. Meanwhile, with the aid of additional auxiliary data, outlier exposure~(OE)-based methods~\cite{OE:conf/iclr/HendrycksMD19,DOE:conf/iclr/WangY0DKLH023,DOS:conf/iclr/JiangCCWW24}, e.g. OE~\cite{OE:conf/iclr/HendrycksMD19} and DOE~\cite{DOE:conf/iclr/WangY0DKLH023}, facilitate the learning of more robust and generalizable decision boundaries. Among existing OOD detection methods, OE-based methods stand out as the most effective approach and consistently achieve superior performance as it leverages surrogate OOD samples during training to help the model distinguish between ID and OOD distributions.

Despite the effectiveness of OE-based methods, they inevitably disregard the relations of OOD samples with respect to different ID categories, which degrades OOD detection performance. We use a toy experiment to illustrate this issue. As shown  in \cref{fig:intro}(a), where ID categories comprise \textit{cat}, \textit{dog}, and \textit{truck}, while the OOD category is the \textit{school bus}. To enlarge the margin between ID and OOD samples, OE-based methods assign uniform labels to OOD samples~\cite{OE:conf/iclr/HendrycksMD19, DAL:conf/nips/WangFZLLH23}, which inevitably gives rise to an \textit{over-softening effect}, resulting in nearly uniform distances between the model predictions of OOD samples and those of different ID samples. In particular, OOD samples maintain a smaller distance to semantically irrelevant categories~(e.g., \textit{Cat}) than to the most relevant one~(e.g., \textit{Truck}), as shown in \cref{fig:intro}(a) (left). Meanwhile, as shown in \cref{thm:delta_margin_upper}, disregarding such relations leads to a suboptimal separation bound, thereby degrading OOD detection performance. Intuitively, OOD samples often contain semantic information with certain ID categories~\cite{ood_semantic_semi_supervised:conf/cvpr/WangQLSZC23, ood_useful:conf/eccv/WallinSKH24, ood_cvpr:conf/cvpr/ChenD25}. A desirable model behavior is therefore to maintain relatively lower distance to the most semantically similar ID category compared with others~\cite{LogitNorm:conf/icml/WeiXCF0L22}. \cref{thm:T_mitigate} further shows that preserving relations mitigates the over-softening effect, as shown in \cref{fig:intro}(a) (right) and improves OOD detection, which is also observed empirically in \cref{fig:intro}(b).

Building on this insight, we propose \underline{A}daptive Confidence \underline{OE} (AOE), a simple yet effective method that adaptively refines outlier labels via temperature scaling. Specifically, AOE employs a learnable temperature to smooth the model predictions for OOD samples and uses the resulting soft targets to supervise OOD training, thereby preserving the relations between OOD samples and ID categories; meanwhile, optimizing the temperature drives these smoothed targets toward a high-entropy distribution. To ensure robust convergence, AOE supports both joint optimization of model parameters and temperatures, as well as an alternating optimization strategy that decouples their respective update dynamics. Extensive experiments across diverse benchmarks demonstrate that AOE consistently outperforms existing state-of-the-art methods and can be seamlessly integrated with them.

The contributions of this work are summarized as follows:

\begin{itemize}
    \item We provide a theoretical analysis showing that the commonly used uniform labeling strategy in OE induces an \emph{over-softening effect}, which overlooks the intrinsic relationships between OOD samples and ID categories, leading to a suboptimal separation margin. We further demonstrate that introducing a temperature parameter $T$ can effectively alleviate this issue.

    \item Motivated by this analysis, we propose AOE, a simple yet effective method that recalibrates outlier labels via temperature scaling. By leveraging temperature-scaled model predictions as soft targets, AOE preserves semantic relationships with ID classes while promoting high-entropy distributions, thereby improving margin separation.

    \item We conduct extensive experiments on multiple OOD detection benchmarks, demonstrating that AOE consistently improvements over corresponding OE-based methods and achieves robust gains across diverse datasets and model architectures.
\end{itemize}
\section{Related Work}

\subsection{OOD Detection}
OOD detection plays a vital role in open-world machine learning by accurately classifying ID samples while identifying OOD samples~\cite{MSP:conf/iclr/HendrycksG17, domain_adapation_noise}. 
Early post-hoc OOD detection methods~\cite{GradNorm:conf/nips/HuangGL21, KNN-based:conf/icml/SunM0L22} primarily focus on designing discriminative scoring functions to separate OOD samples at inference time. In contrast, training-time regularization methods \cite{LogitNorm:conf/icml/WeiXCF0L22, T2FNorm:conf/cvpr/RegmiPDGSB22, SNN:conf/aaai/GhosalSL24, UM:conf/icml/ZhuLYLX023, pskd:conf/icml/0074X25} aim to improve OOD detection by designing training strategies that encourage the learning of more discriminative representations.

Unlike the aforementioned methods, OE-based methods~\cite{OE:conf/iclr/HendrycksMD19} incorporate realistic outliers into OOD detection models to enhance the robustness of decision boundaries. OE encourages OOD samples to exhibit a more uniform response during training, thereby reducing model overconfidence~\cite{OE:conf/iclr/HendrycksMD19}. Building on this idea, MixOE~\cite{MixOE:conf/wacv/ZhangILCL23} leverages MixUp between ID and OOD samples to synthesize more diverse OOD samples. DOE~\cite{DOE:conf/iclr/WangY0DKLH023} further enhances robustness by generating hard OOD samples through model-driven transformations, enabling more reliable detection. DAL~\cite{DAL:conf/nips/WangFZLLH23} synthesizes OOD samples by searching for the worst case within a Wasserstein ball around the OOD distribution, thereby reducing distribution discrepancy. OCL~\cite{ocl:conf/aaai/MiaoP0LZ24} explicitly assigns OOD samples to an additional $(K\!+\!1)$-th outlier class. By introducing outliers during training, these methods significantly improve OOD detection performance. However, these methods typically supervise OOD samples with uniform labels, which inevitably leads to the \textit{over-softening effect}.

\subsection{Temperature Scaling}

Since the temperature coefficient controls the sharpness of the softmax output distribution, temperature scaling~\cite{TS:conf/icml/GuoPSW17,KD:journals/corr/HintonVD15, domain_scaling:conf/nips/YuB0J22} has attracted significant attention. As a post-processing calibration technique, temperature scaling is first popularized for model calibration~\cite{TS:conf/icml/GuoPSW17}. Temperature scaling~\cite{TS:conf/icml/GuoPSW17} proposes an effective method to rescale the logits using a shared temperature parameter. T-CIL \cite{t_cil:conf/cvpr/Hwang0W25} optimizes the temperature by minimizing the calibration loss on adversarially perturbed samples, where perturbations are guided by feature-space distances. Additionally, temperature scaling has been widely applied in knowledge distillation~\cite{KD:journals/corr/HintonVD15} and contrastive learning~\cite{SimCLR:conf/icml/ChenK0H20}. Inspired by these practices, we incorporate temperature scaling to recalibrate outlier labels.

\section{Methodology}

\revised{In this section, we first introduce the preliminaries of OOD detection. We then provide a theoretical analysis showing that uniform outlier labels cause the over-softening effect and lead to a suboptimal separation margin. Motivated by this, we propose Adaptive Confidence OE to recalibrate outlier labels via temperature scaling, followed by the optimization strategy and its extension to different OE-based methods. The frequently used notations are summarized in \cref{tab:notation}.
}

\begin{table}[t]
\centering
\caption{Frequently used notations and their mathematical meanings.}
\label{tab:notation}
\footnotesize
\begin{tabular}{@{}>{\centering\arraybackslash}p{0.27\columnwidth}p{0.67\columnwidth}@{}}
\hline
Notation & Meaning \\
\hline
$\mathcal{X}$ & Input space. \\
$\mathcal{Y}=\{y_1,\ldots,y_K\}$ & Label set of $K$ ID categories. \\
$\mathcal{D}_{\mathrm{ID}}$, $\mathcal{D}_{\mathrm{OOD}}$ & ID and OOD distributions. \\
$x_i$, $x_o$ & ID and OOD samples. \\
$f(\cdot;\theta)$ & Classification model with parameters $\theta$. \\
$s(\cdot)$ & Softmax function. \\
$S(x;\theta)$ & OOD confidence score. \\
$\Delta(\theta)$ & Separation margin between ID and OOD samples. \\
$U$ & Uniform distribution over ID categories. \\
$m(f(x;\theta))$ & Prediction difference between the top two logits. \\
$\mu_i$, $\mu_o$, $\nu_o^2$ & Statistics of ID/OOD prediction differences. \\
$T$ & Learnable temperature for outlier-label recalibration. \\
$q_T=s(f(x_o;\theta)/T)$ & Temperature-scaled outlier target. \\
\hline
\end{tabular}
\end{table}

\subsection{Preliminary}
Let $\mathcal{X}$ denote the input space and $\mathcal{Y} = \{y_1, \dots, y_K\}$ the label set of $K$ classes. The goal of OOD detection is to determine whether a sample $x \in \mathcal{X}$ originates from $\mathcal{D}_{\text{ID}}$ or $\mathcal{D}_{\text{OOD}}$. Given a model $f(\cdot;\theta)$ trained on ID and OOD samples, an OOD score function $S(x;\theta)$ assigns a confidence value to $x$. In general, the OOD detector $g_\lambda(x)$ is given by:
\begin{equation}
g_\lambda(x) =
\begin{cases}
\text{ID}, & \text{if } S(x;\theta) \ge \lambda, \\
\text{OOD}, & \text{otherwise.}
\end{cases}
\end{equation}
where $\lambda$ is a threshold typically chosen to correctly classify the majority of ID samples (e.g., $95\%$). A sample is thus regarded as ID only if its confidence exceeds the threshold.

\subsection{Theoretical Analysis in Outlier Exposure}
OE-based methods improve OOD detection by introducing auxiliary outliers into the standard ID training. During training, these outliers are explicitly supervised with uniform labels to maximize entropy, thereby regularizing the decision boundary in regions beyond the support of the ID distribution~\cite{OE:conf/iclr/HendrycksMD19}. Specifically, the training objective can be expressed as:
\begin{equation}
\min_{\theta}\;
\mathcal{L}_{\mathrm{ce}}\!\left(s(f(x_i;\theta))\right)
\;+\;
\alpha
\mathcal{L}_{\mathrm{KL}}
\!\left(
\mathcal{U}\,\big\|\, s(f(x_o;\theta))
\right),
\label{eq:oe_objective}
\end{equation}
where $\mathcal{U}=(\tfrac{1}{K},\dots,\tfrac{1}{K})$ denotes the uniform distribution, and $\alpha$ is a hyperparameter balancing the contributions of ID samples $x_i$ and OOD samples $x_o$. Here, $s(\cdot)$ represents the softmax function \cite{MSP:conf/iclr/HendrycksG17}.

To analyze the impact of outliers on OOD detection performance, we introduce a definition to characterize the separability margin of ID samples and OOD samples.

\begin{definition}[Separability Margin]
Separation between the $\mathcal{D}_{\mathrm{ID}}$ and the $\mathcal{D}_{\mathrm{OOD}}$ is quantified by the discrepancy between their induced confidence distributions. Specifically, the separation is defined as
\begin{equation}
\Delta(\theta)
\triangleq
\mathbb{E}_{x_i \sim \mathcal{D}_{\mathrm{ID}}}\!\left[ S(x_i;\theta) \right]
-
\mathbb{E}_{x_o \sim \mathcal{D}_{\mathrm{OOD}}}\!\left[ S(x_o;\theta) \right],
\end{equation}
\label{eq:distribution_separation}
where $S(\cdot;\theta)$ denotes an OOD score from the model.
\end{definition}

Based on \cref{eq:distribution_separation}, effective OOD detection requires maintaining a sufficiently large separation margin between the confidence score distributions of ID and OOD samples~\cite{DOE:conf/iclr/WangY0DKLH023, ocl:conf/aaai/MiaoP0LZ24}. However, the uniform supervision in Eq.~\eqref{eq:oe_objective} induces an inherent \textit{over-softening effect} on model predictions for OOD samples. Since the relations between OOD samples and ID categories are reflected in the differences among their prediction scores~\cite{DBLP:conf/acl/acl_relation_1, DBLP:conf/acl/acl_relation_2}, we can characterize the effect at the logits level as follows.

\begin{proposition}
\label{prop:oversoften}
Let $z = f(x;\theta)$ denote the logits of $x$ and $z^{u}$ denote the logits with uniform supervision. The prediction differences across ID categories are contracted:
\begin{equation}
\label{eq:oversoften}
\bigl| z_i^{u} - z_j^{u} \bigr|
\;\le\;
\bigl| z_i - z_j \bigr|,
\forall\, i,j \in \{1,\dots,K\},\ i \neq j.
\end{equation}
\end{proposition}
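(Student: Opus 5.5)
The plan is to make precise what ``logits with uniform supervision'' means and then show that the OE term in \cref{eq:oe_objective} pulls pairwise logit gaps toward zero. The natural model is the following. Take the logits $z$ at the current parameters, and take one gradient step (step size $\eta>0$) on the outlier term $\alpha\,\mathcal{L}_{\mathrm{KL}}(\mathcal{U}\,\|\,s(z))$ directly in logit space. Call the result $z^{u}$. Alternatively, view $z^{u}$ as any point on the gradient-flow trajectory starting at $z$.

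\textbf{Step 1.} Compute the gradient of the KL term with respect to the logits. Since
\[
\mathcal{L}_{\mathrm{KL}}(\mathcal{U}\,\|\,s(z))=-\log K-\tfrac{1}{K}\sum_k \log s_k(z),
\]
the standard softmax--cross-entropy identity gives
\[
\partial \mathcal{L}_{\mathrm{KL}}/\partial z_k = s_k(z)-\tfrac1K .
\]
Hence the update is
\[
z^{u}_k = z_k - \eta\alpha\bigl(s_k(z)-\tfrac1K\bigr).
\]

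\textbf{Step 2.} Subtract the update for indices $i$ and $j$:
\[
z^{u}_i-z^{u}_j=(z_i-z_j)-\eta\alpha\bigl(s_i(z)-s_j(z)\bigr).
\]
Softmax is order preserving, so $s_i(z)-s_j(z)$ has the same sign as $z_i-z_j$. The correction therefore always points toward zero. This is the core of the contraction claim.

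\textbf{Step 3.} Rule out overshoot, i.e.\ show that the correction never exceeds $|z_i-z_j|$ in magnitude. Use the mean value theorem on $t\mapsto e^t$:
\[
|s_i-s_j| = \frac{|e^{z_i}-e^{z_j}|}{\sum_k e^{z_k}} \le \max(s_i,s_j)\,|z_i-z_j| \le |z_i-z_j| .
\]
So for $\eta\alpha\le 1$,
\[
|z^{u}_i-z^{u}_j| = |z_i-z_j|\,\bigl(1-\eta\alpha\,c_{ij}\bigr), \qquad c_{ij}\in[0,1],
\]
which gives \cref{eq:oversoften}, with equality only when $z_i=z_j$. For the gradient-flow version no step-size condition is needed. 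Along the flow,
\[
\tfrac{d}{dt}(z_i-z_j)^2=-2\alpha(z_i-z_j)(s_i-s_j)\le 0,
\]
so every pairwise gap is monotonically nonincreasing. Iterating, or integrating, extends the claim to the fully trained outlier logits. Note also that the unique minimizer of the KL term has all logits equal, so $z_i^{u}-z_j^{u}\to 0$ in the limit.

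\textbf{Main obstacle.} The difficulty is not the calculation but the modeling. The statement does not say whether $z^{u}$ is obtained by optimizing in logit space or through the parameters $\theta$. Through $\theta$, the update is $-\eta J J^\top\nabla_z\mathcal{L}$ with Jacobian $J$, and monotonicity of individual gaps can fail. I would therefore state the result explicitly in the logit-space, or sufficiently small-step, setting. I would remark that the parametric case holds under an approximately isotropic neural-tangent assumption, $JJ^\top\approx cI$, which reduces it to the argument above. The ID cross-entropy term does not act on $x_o$, so it does not interfere under this idealization.
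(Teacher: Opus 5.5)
Your proposal is correct and follows the paper's proof. Both take a single logit-space gradient step on $\mathrm{KL}(\mathcal{U}\,\|\,s(z))$, with gradient $s_k(z)-1/K$, and then get the contraction from the order-preserving property of softmax. Your Step 3 bound $|s_i-s_j|\le|z_i-z_j|$ adds something: it turns the paper's bare ``sufficiently small $\eta$ such that the sign is maintained'' into the explicit condition $\eta\alpha\le 1$, which holds uniformly in $z$ and over all pairs $i,j$.
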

\cref{prop:oversoften} shows that uniform supervision systematically contracts prediction differences across ID categories, driving OOD predictions toward a uniform configuration.

\begin{theorem}
\label{thm:delta_margin_upper}
For any $x$, $k^\star(x)=\arg\max_k z_k$, and define the prediction differences \(m\!\left(f(x;\theta)\right)
\triangleq
f_{k^\star(x)}(x;\theta)-\max_{j\neq k^\star(x)} f_j(x;\theta).\)
Then the separation margin admits the following explicit upper bound:
\begin{equation}
\begin{aligned}
\Delta(\theta)
\le \;
\frac{1}{1+\exp\!\big(-\mu_{i}\big)}
-
\frac{1}{1+(K-1)\exp\!\big(-\mu_{o}\big)}
\\
\;
-
\frac{(K-1)\exp\!\big(-\mu_{o}\big)}
     {2\big(1+(K-1)\exp\!\big(-\mu_{o}\big)\big)^2}
\,\nu_{o}^2
\;+\;
\mathcal{R}.
\end{aligned}
\end{equation}
where $\exp(\cdot)$ denotes the exponential function, $\mathcal{R}$ collects higher-order remainder terms, and $\mu_{i}$ and $\mu_{o}$ denote the means of the corresponding prediction differences, respectively, with $\nu_{o}^2$ representing the variance.
\end{theorem}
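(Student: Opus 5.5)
The plan is to take $S(x;\theta)$ to be the maximum softmax probability, as in the MSP score the paper builds on. We then sandwich $S$ between two functions of the top-two logit gap $m = m(f(x;\theta))$. The ID expectation is handled by Jensen's inequality and the OOD expectation by a second-order Taylor expansion around $\mu_o$. Finally, every contribution beyond the displayed terms is collected into $\mathcal{R}$. One caveat up front: the displayed variance coefficient is $h'(\mu_o)/2$, not the curvature term $h''(\mu_o)/2$ that Taylor's theorem produces. So the argument only reaches the stated form if $\mathcal{R}$ is allowed to include an explicit nonnegative correction (Step 4).

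\textbf{Step 1 (pointwise sandwich).} Write $k^\star = k^\star(x)$. The maximum softmax probability is
\[
S(x;\theta) = \Bigl(1+\sum_{j\neq k^\star} e^{z_j - z_{k^\star}}\Bigr)^{-1}.
\]
Every term $e^{z_j - z_{k^\star}}$ with $j \neq k^\star$ is at most $e^{-m}$, and the runner-up term equals $e^{-m}$. Hence the sum lies between $e^{-m}$ and $(K-1)e^{-m}$, which gives
\[
h(m) \triangleq \frac{1}{1+(K-1)e^{-m}} \;\le\; S(x;\theta) \;\le\; \sigma(m) \triangleq \frac{1}{1+e^{-m}}.
\]
Apply the upper bound to ID samples and the lower bound to OOD samples. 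This yields $\Delta(\theta) \le \mathbb{E}[\sigma(m_i)] - \mathbb{E}[h(m_o)]$, where $m_i$ and $m_o$ denote the gaps of ID and OOD samples.

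\textbf{Step 2 (ID term).} Since $m \ge 0$ by definition and $\sigma$ is concave on $[0,\infty)$, Jensen's inequality gives $\mathbb{E}[\sigma(m_i)] \le \sigma(\mu_i)$. This is exactly the first displayed term and needs no remainder.

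\textbf{Step 3 (OOD term).} Put $a = (K-1)e^{-\mu_o}$ and expand $h$ around $\mu_o$. Differentiating gives
\[
h'(\mu_o) = \frac{a}{(1+a)^2}, \qquad h''(\mu_o) = \frac{a(a-1)}{(1+a)^3}.
\]
Because the first-order term vanishes in expectation, this gives $\mathbb{E}[h(m_o)] = h(\mu_o) + \tfrac12 h''(\mu_o)\nu_o^2 + \mathcal{R}_3$, where $\mathcal{R}_3$ is the third-order Lagrange remainder. It is bounded by $\tfrac16\|h'''\|_\infty\,\mathbb{E}|m_o-\mu_o|^3$, and $h'''$ is uniformly bounded.

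\textbf{Step 4 (matching the displayed form; main obstacle).} The statement's variance coefficient is $h'(\mu_o)/2 = a/\bigl(2(1+a)^2\bigr)$, not $h''(\mu_o)/2$. A direct computation gives
\[
h'(\mu_o) - h''(\mu_o) = \frac{2a}{(1+a)^3} \ge 0.
\]
Therefore
\[
-\tfrac12 h''(\mu_o)\nu_o^2 = -\tfrac12 h'(\mu_o)\nu_o^2 + \frac{a}{(1+a)^3}\,\nu_o^2,
\]
and substituting Steps 2 and 3 into Step 1 yields the displayed inequality with
\[
\mathcal{R} \triangleq \frac{a}{(1+a)^3}\,\nu_o^2 - \mathcal{R}_3.
\]
This step is where the argument is weakest. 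The extra term $a\nu_o^2/(1+a)^3$ is genuinely second order in $\nu_o$, not higher order. It is only small relative to the displayed variance term when $a$ is large. That is the OE regime, where $m_o$ is driven near $0$ so that $a \approx K-1$, giving $h'' \approx h'$ up to a factor $1 - 2/(1+a)$.

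So the honest form of the argument is this: the displayed inequality holds exactly once $\mathcal{R}$ contains this correction alongside the cubic remainder. In the over-softened regime where $a \approx K-1$, the correction is smaller than the displayed variance term by a factor of $2/K$. If $\mathcal{R}$ must consist only of genuinely higher-order remainders, the displayed coefficient should be replaced by $h''(\mu_o)/2$. That form is the strict Taylor statement, and it has the same qualitative reading for $a > 1$.
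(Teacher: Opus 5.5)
Your proposal is correct and follows the paper's proof essentially step for step. Both use MSP as the score, the sandwich $h(m)\le S\le\sigma(m)$, Jensen for the ID term, and a second-order Taylor expansion of $h$ about $\mu_o$; both then trade $h''(\mu_o)$ for $h'(\mu_o)$ and push the difference into the remainder, which is exactly the paper's $\widetilde{\mathcal{R}}$ that ``absorbs the discrepancy between $g''$ and $g'$.'' Your version is slightly cleaner in three places: you note that $m\ge 0$ holds by definition (the paper assumes ID margins are nonnegative only with high probability), your $h''=a(a-1)/(1+a)^3$ is right (the paper's displayed $g''$ has $e^{-t}$ where $e^{-2t}$ belongs), and you state openly that the absorbed term $a\nu_o^2/(1+a)^3$ is second order rather than genuinely higher order.
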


\cref{thm:delta_margin_upper} reveals an explicit trade-off in the separation margin. Specifically, a moderate reduction of the prediction differences of OOD samples enlarges the separation margin via the first-order effect, whereas excessive contraction introduces a negative quadratic correction that diminishes the separation. This analysis highlights that effective OOD detection requires balanced regulation of prediction differences rather than their aggressive suppression.

\begin{theorem}
\label{thm:T_mitigate}
Consider the scaled labels $q_T = s(f(x_o;\theta)/T)$ with $T>1$ for outlier $x_o$. Let $z^{t}$ denote the logits after training with scaled supervision. Then the induced update of the differences satisfies
\begin{equation}
\label{eq:margin_update_T}
\begin{aligned}
m(z^{t})
=\;
m(z)
-\eta\Bigg[
\frac{e^{m(z)}-1}{e^{m(z)}+1+\sum_{k\neq a,b}\exp(z_k-z_b)}
\\
\qquad\;
-
\frac{e^{m(z)/T}-1}{e^{m(z)/T}+1+\sum_{k\neq a,b}\exp\!\big((z_k-z_b)/T\big)}
\Bigg].
\end{aligned}
\end{equation}
where $a=\arg\max_k z_k$ and $b=\arg\max_{j\neq a} z_j$.
\end{theorem}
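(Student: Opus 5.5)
The plan is to model training with the scaled targets as one gradient step on the logits and compute the change in the top-two logit gap directly. Since the analysis is at the logit level, as in \cref{prop:oversoften}, I treat $z=f(x_o;\theta)$ as the free variable. The outlier loss is $\mathcal{L}_{\mathrm{KL}}(q_T\,\|\,s(z))$, and I hold $q_T$ fixed as a stop-gradient target, which matches its role as a recalibrated label. The update is then $z^{t}=z-\eta\nabla_z\mathcal{L}$. The standard softmax cross-entropy identity gives $\nabla_z \mathcal{L}_{\mathrm{KL}}(q_T\|s(z)) = s(z)-q_T$, because the entropy of $q_T$ does not depend on $z$. Hence, coordinatewise,
\begin{equation*}
z^{t}_k = z_k - \eta\bigl(s(z)_k - (q_T)_k\bigr).
\end{equation*}

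Next I compute the gap. With $a=\arg\max_k z_k$ and $b=\arg\max_{j\neq a} z_j$,
\begin{equation*}
m(z^{t}) = z^t_a - z^t_b = m(z) - \eta\Bigl[\bigl(s(z)_a - s(z)_b\bigr) - \bigl((q_T)_a-(q_T)_b\bigr)\Bigr].
\end{equation*}
This assumes the ordering $a,b$ is preserved after the update, which holds for small $\eta$ when the top two logits are not tied. I will state this as the small-step regime, the same one implicit in \cref{prop:oversoften}.

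It remains to rewrite each softmax difference. Dividing the numerator and denominator of $s(z)_a-s(z)_b=(e^{z_a}-e^{z_b})/\sum_k e^{z_k}$ by $e^{z_b}$ gives
\begin{equation*}
s(z)_a-s(z)_b=\frac{e^{m(z)}-1}{e^{m(z)}+1+\sum_{k\neq a,b}\exp(z_k-z_b)}.
\end{equation*}
The same manipulation applied to logits $z/T$ gives the second fraction, with $m(z)/T$ and $(z_k-z_b)/T$ in place of $m(z)$ and $z_k-z_b$. Substituting both expressions yields \cref{eq:margin_update_T}.

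The main obstacle is justifying the modelling choices rather than the algebra. These choices are treating the update as a single gradient step in logit space, which is implicitly the NTK or identity-Jacobian simplification, and treating $q_T$ as a detached target. I will state both explicitly as assumptions. As a follow-up remark, I will compare the result with the uniform case, where $(q_T)_a-(q_T)_b=0$, so the contraction is the full first fraction. Since $T>1$ makes the second fraction smaller but still positive, the contraction is reduced while the gap still shrinks. This yields the mitigation of over-softening and connects to the trade-off in \cref{thm:delta_margin_upper}.
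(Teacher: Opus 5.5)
Your proposal is correct and takes essentially the same route as the paper: one logit-space gradient step on $\mathrm{KL}(q_T\,\|\,s(z))$ with gradient $s(z)-q_T$, then rewriting $s(z)_a-s(z)_b$ and $(q_T)_a-(q_T)_b$ by dividing numerator and denominator by $e^{z_b}$ and $e^{z_b/T}$ respectively. Your two explicit assumptions, that $q_T$ is a detached target and that the top-two ordering survives a small step $\eta$, are left implicit in the paper's proof, so stating them only tightens the same argument.
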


\begin{corollary}
\label{cor:finite_T}
Let $\Delta m_T \triangleq m(z)-m(z^{t})$ denote the contraction. Under the conditions of \cref{thm:T_mitigate}, the contraction induced by temperature scaling at an optimal finite temperature $T^\star\in(1,\infty)$ is strictly smaller than that induced by uniform outlier exposure as $T\to\infty$.
\end{corollary}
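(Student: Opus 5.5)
We read off the contraction from \cref{thm:T_mitigate}: $\Delta m_T = \eta\,[\,G(m;1) - G(m;T)\,]$, where
\begin{equation*}
G(m;T) \triangleq \frac{e^{m/T}-1}{e^{m/T}+1+\sum_{k\neq a,b}\exp\!\big((z_k-z_b)/T\big)}.
\end{equation*}
Here $m=m(z)>0$ is the current top-two gap. The first term $G(m;1)$ does not depend on $T$, so comparing contractions reduces to comparing the subtracted term $G(m;T)$ across temperatures. The contraction is strictly smaller at $T^\star$ than in the uniform limit if and only if $G(m;T^\star) > \lim_{T\to\infty} G(m;T)$.

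\textbf{Step 1: the uniform-OE limit.} As $T\to\infty$, $q_T\to\mathcal{U}$, which recovers the objective in Eq.~\eqref{eq:oe_objective}. We check this in the formula: $e^{m/T}\to 1$ and each $\exp((z_k-z_b)/T)\to 1$. Hence the numerator of $G$ tends to $0$ and the denominator to $K$, so $\lim_{T\to\infty}G(m;T)=0$. The uniform contraction is therefore $\Delta m_\infty=\eta\,G(m;1)$, the maximal one. This is also consistent with \cref{prop:oversoften}.

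\textbf{Step 2: positivity for every finite $T>1$.} For finite $T$ and $m>0$ we have $e^{m/T}>1$, so the numerator $e^{m/T}-1>0$. The denominator is a sum of positive terms, so $G(m;T)>0$. Consequently $\Delta m_T=\eta[G(m;1)-G(m;T)]<\eta\,G(m;1)=\Delta m_\infty$ for every $T\in(1,\infty)$, and in particular at any optimal $T^\star\in(1,\infty)$.

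\textbf{Step 3: the optimal temperature.} Optimality of $T^\star$ is only needed so that the statement is about a well-defined finite temperature. If we want to show that a finite optimum exists, we use continuity of $G(m;\cdot)$ on $(1,\infty)$. We also use that $G(m;T)\to G(m;1)$ as $T\to1^+$, where the contraction vanishes and no softening occurs. Combined with the limit $G\to 0$ as $T\to\infty$ and the trade-off in \cref{thm:delta_margin_upper}, the margin bound is not maximized at either endpoint. By continuity, the bound therefore attains an interior maximizer $T^\star$.

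The main obstacle is Step 3. Making rigorous that the margin-optimal temperature is finite requires linking $\Delta m_T$ to $\mu_o$ and $\nu_o^2$ in \cref{thm:delta_margin_upper}, and controlling the remainder $\mathcal{R}$. We would first try to treat the bound as a smooth function of $T$ with opposite-signed endpoint derivatives. If that fails, we would simply take $T^\star$ as given by hypothesis, since Steps 1 and 2 already prove the strict inequality for every finite $T>1$.
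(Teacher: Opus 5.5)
Your Steps 1--2 are essentially the paper's proof: it writes $\Delta m_T=\eta\,[h(m(z),R(z,b))-h(m(z)/T,R(z/T,b))]$ with $h(x,R)=(e^x-1)/(e^x+1+R)$, notes $h(0,K-2)=0$ in the uniform limit, and gets positivity of the subtracted term for finite $T>1$ from $\partial h/\partial x>0$ (you instead check the sign of the numerator directly), so $\Delta m_T<\Delta m_\infty$. The paper's proof likewise never uses optimality of $T^\star$---the inequality holds for every finite $T>1$ once $m(z)>0$---so your Step 3 is unnecessary, and its heuristic existence argument (which is not rigorous as written) can simply be dropped.
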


\cref{cor:finite_T} indicates that OE-based methods, corresponding to the extreme case $T\to\infty$, induces the strongest contraction. In contrast, a properly chosen finite temperature can smooth OOD predictions while preserving stronger relations. This observation motivates an adaptive mechanism for selecting the temperature to recalibrate outlier labels. Detailed proofs are provided in Appendix. Additionally, Appendix presents a theoretical analysis from the perspective of generalization bounds, substantiating the improved OOD detection capability.

\begin{figure}[t]
    \centering
    \includegraphics[width=0.6\linewidth]{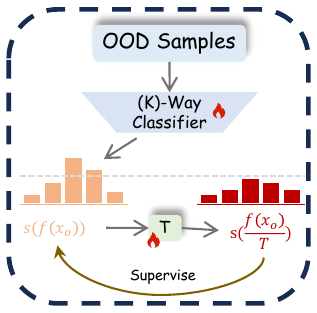}
    \caption{Illustration of model training with AOE. During training, the model leverages a learned temperature to smooth its outputs on OOD samples and uses the resulting softened predictions as outlier labels. The model parameters and the temperature are optimized to achieve a self-evolving calibration of the decision boundary.}
    \label{fig:framework}
\end{figure}

\subsection{Adaptive Confidence Outlier Exposure}
\label{sec:algo}
Based on the above analysis, there exists an optimal temperature \( T^* \) that balances the trade-off between smoothing OOD predictions and preserving the prediction differences across ID categories. However, this optimal temperature cannot be computed explicitly during training. To this end, we treat the temperature coefficient as a learnable parameter within the OOD detection model. \cref{fig:framework} illustrates the training framework of our proposal. During inference, the learned temperature $T$ is not used to calibrate the model outputs.

\begin{algorithm}[t]
\caption{Training procedure of AOE}
\label{alg:aoe}
\begin{algorithmic}[1]
\Require ID dataset $\mathcal{D}_{\mathrm{ID}}$, OOD dataset $\mathcal{D}_{\mathrm{OOD}}$, model $f(\cdot;\theta)$, temperature $T$, trade-off parameter $\alpha$
\Ensure Trained model parameter $\theta$

\For{each training iteration}
    \State Sample an ID mini-batch $(x_i,y_i)$ from $\mathcal{D}_{\mathrm{ID}}$
    \State Sample an OOD mini-batch $x_o$ from $\mathcal{D}_{\mathrm{OOD}}$
    
    \State Compute ID prediction $p_i = s(f(x_i;\theta))$
    
    \State Compute OOD prediction $p_o = s(f(x_o;\theta))$
    
    \State Compute temperature-scaled target $q_T = s(f(x_o;\theta)/T)$
    
    \State Update $\theta$ and $T$ by \cref{eq:optimization_joint} or \cref{eq:optimization_alte}
\EndFor

\State \Return $\theta$
\end{algorithmic}
\end{algorithm}

\noindent\textbf{Optimization Objective.} Temperature $T$ is optimized to regulate the model’s prediction on OOD samples, encouraging the temperature-scaled predictions to align with a uniform distribution while aligning the model outputs with these softened targets so as to preserve the relation of OOD samples with respect to ID categories. Accordingly, the optimization objective is formulated as:
\begin{equation}
\label{eq:ood_objective}
\begin{aligned}
\min_{T}\;\;
&\mathcal{L}_{\mathrm{align}}
\big(s(f(x_o;\theta)/T), \mathcal{U} \big)
\\
&+
\mathcal{L}_{\mathrm{align}}
\big(s(f(x_o;\theta)), s(f(x_o;\theta)/T) \big),
\end{aligned}
\end{equation}
where $\mathcal{L}_{\mathrm{align}}(\cdot)$ denotes a distribution alignment loss, such as the Kullback--Leibler divergence \cite{kl_diver:journals/tcyb/JiZYWZG22}. In this formulation, the optimal temperature $T^*$ is estimated from the current model predictions, while the learned temperature simultaneously influences the model outputs through temperature scaling \cite{domain_scaling:conf/nips/YuB0J22, t_cil:conf/cvpr/Hwang0W25}. Model parameters are optimized using available supervisory signals, where ID samples are supervised by ground-truth labels \cite{celoss:conf/icml/0003HNH24} and OOD samples are supervised by the smoothed predictions. 

\noindent\textbf{Optimization Strategy.}
To determine the optimal model parameters $\theta$ and temperature $T$, we consider two optimization paradigms, namely \textit{Joint Training} and \textit{Alternating Training}.

\textit{Joint Training.} This paradigm optimizes $\theta$ and $T$ simultaneously within a single computational graph. Both variables are updated based on the combined objective:
\begin{equation}
\begin{aligned}
\min_{\theta, T}\;\;
& \mathcal{L}_{\mathrm{ce}}
\big(s(f(x_i;\theta)), y \big) + \alpha \Big(
\mathcal{L}_{\mathrm{align}}
\big(s(f(x_o;\theta)/T), \mathcal{U} \big) \\
& \qquad\;\; + \mathcal{L}_{\mathrm{align}}
\big(s(f(x_o;\theta)), s(f(x_o;\theta)/T) \big)
\Big)
\end{aligned}
\label{eq:optimization_joint}
\end{equation}
where $\mathcal{L}_{\mathrm{ce}}(\cdot)$ denotes the standard cross-entropy loss \cite{celoss:conf/icml/0003HNH24}, and $\alpha$ balances the contributions from ID and OOD samples \cite{OE:conf/iclr/HendrycksMD19}. 

\textit{Alternating Training.} This paradigm decouples the optimization by iteratively updating $T$ and $\theta$. Specifically, $T^*$ is first updated to satisfy the alignment constraint, and is subsequently treated as a constant to supervise $\theta$:
\begin{equation}
\begin{aligned}
\min_{\theta}\;\;
& \mathcal{L}_{\mathrm{ce}}
\big(s(f(x_i;\theta)), y \big)+\alpha \;\mathcal{L}_{\mathrm{align}}
\big(s(f(x_o;\theta)), s(f(x_o;\theta)/T^*) \big) \\
\text{s.t.}\;\;
& T^* \in \arg\min_{T}
\mathcal{L}_{\mathrm{align}}
\big(s(f(x_o;\theta)/T), \mathcal{U} \big).
\end{aligned}
\label{eq:optimization_alte}
\end{equation}

In practice, Eq.~\eqref{eq:optimization_alte} is solved by alternating minimization \cite{bilevel:journals/eor/ParvasiMTYL26, bilevel_2:journals/pr/DengLPSX26}. At iteration \(t\), the temperature update is given by
\begin{equation}
T^{(t+1)} = T^{(t)} - \eta_T
\nabla_T \mathcal{L}_{\mathrm{align}}
\big(s(f(x_o;\theta^{(t)})/T^{(t)}), \mathcal{U} \big),
\label{eq:T_update}
\end{equation}
where \(\eta_T\) denotes the learning rate for the temperature update. Given \(T^{(t+1)}\), the model parameters are updated by
\begin{equation}
\begin{aligned}
\theta^{(t+1)}
= \theta^{(t)} - \eta_\theta
\nabla_\theta \Big(
 \mathcal{L}_{\mathrm{ce}}
\big(s(f(x_i;\theta^{(t)})), y_i \big) \\
 + \mathcal{L}_{\mathrm{align}}
\big(s(f(x_o;\theta^{(t)})),
     s(f(x_o;\theta^{(t)})/T^{(t+1)}) \big)
\Big),
\end{aligned}
\label{eq:theta_update}
\end{equation}
where \(\eta_\theta\) denotes the learning rate for updating \(\theta\). In the parameter update step, \(T^{(t+1)}\) is treated as a constant, yielding a first-order approximation to the optimization objective. The overall algorithm of our model is outlined in \cref{alg:aoe}.

\begin{table*}[t]
\centering
\caption{Comparison on ImageNet benchmarks. All values are percentages, and OOD detection results are averaged over multiple OOD test datasets. The best results are in \textbf{bold}, with the second-best \underline{underlined}. Detailed results for each OOD dataset are provided in Appendix. $\uparrow$ indicates that larger values are better, while $\downarrow$ indicates that smaller values are better. \textit{AOE-At} denotes alternating training, while \textit{AOE-Jt} denotes joint training.}
\label{tab:imagenet_ood}
\begin{tabular}{lccccccc}
\toprule
Method 
& \multicolumn{2}{c}{Near-OOD}
& \multicolumn{2}{c}{Far-OOD}
& \multicolumn{2}{c}{Average}
& ID ACC$\uparrow$ \\
\cmidrule(lr){2-3} \cmidrule(lr){4-5} \cmidrule(lr){6-7}
 & FPR95$\downarrow$ & AUROC$\uparrow$
 & FPR95$\downarrow$ & AUROC$\uparrow$
 & FPR95$\downarrow$ & AUROC$\uparrow$
 &  \\
\midrule
\multicolumn{8}{c}{\textit{Post-hoc}} \\
MSP        & 54.82\tiny$\pm$0.35 & 83.34\tiny$\pm$0.06 & 35.43\tiny$\pm$0.38 & 90.13\tiny$\pm$0.09 & 45.13\tiny$\pm$0.32 & 86.74\tiny$\pm$0.07 & 86.37\tiny$\pm$0.08 \\
Energy     & 60.24\tiny$\pm$0.57 & 82.50\tiny$\pm$0.05 & 34.86\tiny$\pm$1.30 & 90.86\tiny$\pm$0.21 & 47.55\tiny$\pm$0.76 & 86.68\tiny$\pm$0.12 & 86.37\tiny$\pm$0.08 \\
\midrule
\multicolumn{8}{c}{\textit{Training-time regularization}} \\
LogitNorm  & 57.80\tiny$\pm$1.22 & 82.21\tiny$\pm$0.52 & 25.31\tiny$\pm$0.20 & 93.31\tiny$\pm$0.15 & 41.56\tiny$\pm$0.71 & 87.76\tiny$\pm$0.33 & 86.18\tiny$\pm$0.60 \\
T2FNorm    & 55.65\tiny$\pm$0.20 & 82.72\tiny$\pm$0.05 & 25.25\tiny$\pm$0.48 & 93.38\tiny$\pm$0.11 & 40.45\tiny$\pm$0.33 & 88.06\tiny$\pm$0.07 & 86.52\tiny$\pm$0.21 \\
UM         & 60.23\tiny$\pm$1.13 & 81.79\tiny$\pm$0.38 & 32.46\tiny$\pm$1.30 & 91.68\tiny$\pm$0.37 & 46.34\tiny$\pm$0.81 & 86.74\tiny$\pm$0.28 & 85.01\tiny$\pm$0.31 \\
UMAP       & 60.81\tiny$\pm$0.84 & 81.08\tiny$\pm$0.39 & 32.47\tiny$\pm$0.67 & 91.62\tiny$\pm$0.29 & 46.64\tiny$\pm$0.74 & 86.35\tiny$\pm$0.32 & 86.37\tiny$\pm$0.08 \\
PSKD       & 57.12\tiny$\pm$0.63 & 82.84\tiny$\pm$0.32 & 31.64\tiny$\pm$0.87 & 91.39\tiny$\pm$0.16 & 44.38\tiny$\pm$0.22 & 87.12\tiny$\pm$0.15 & 86.79\tiny$\pm$0.25 \\
\midrule
\multicolumn{8}{c}{\textit{Outlier Exposure}} \\
MixOE          & 86.79\tiny$\pm$0.25 & 82.57\tiny$\pm$0.23 & 40.12\tiny$\pm$0.66 & 88.39\tiny$\pm$0.02 & 49.03\tiny$\pm$0.44 & 85.49\tiny$\pm$0.12 & 85.73\tiny$\pm$0.09 \\
DOE            & 54.14\tiny$\pm$0.51 & 83.23\tiny$\pm$0.60 & 37.60\tiny$\pm$2.95 & 88.24\tiny$\pm$1.45 & 45.87\tiny$\pm$1.72 & 85.73\tiny$\pm$1.00 & 79.87\tiny$\pm$3.12 \\
DAL            & 51.85\tiny$\pm$0.40 & 85.17\tiny$\pm$0.05 & 35.76\tiny$\pm$0.14 & 88.55\tiny$\pm$0.05 & 43.81\tiny$\pm$0.27 & 86.86\tiny$\pm$0.05 & 86.18\tiny$\pm$0.29 \\
OE            & 52.55\tiny$\pm$0.51 & 84.51\tiny$\pm$0.21 & 35.18\tiny$\pm$0.63 & 88.21\tiny$\pm$0.19 & 43.86\tiny$\pm$0.49 & 86.36\tiny$\pm$0.20 & 85.78\tiny$\pm$0.12 \\
OCL             & 51.64\tiny$\pm$0.52 & 84.21\tiny$\pm$0.42 & 34.30\tiny$\pm$1.53 & 89.78\tiny$\pm$0.49 & 42.97\tiny$\pm$1.02 & 86.99\tiny$\pm$0.45 & 86.27\tiny$\pm$0.09 \\
\midrule
\textit{AOE-At}       & \bf50.53\tiny$\pm$0.05 & \bf85.61\tiny$\pm$0.09 & \underline{34.61\tiny$\pm$0.05} & \underline{88.95\tiny$\pm$0.14} & \underline{42.57\tiny$\pm$0.33} & \underline{87.28\tiny$\pm$0.12} & \underline{86.51\tiny$\pm$0.05} \\
\textit{AOE-Jt}       & \underline{50.83\tiny$\pm$0.14} & \underline{85.57\tiny$\pm$0.07} & \bf33.08\tiny$\pm$0.53 & \bf89.24\tiny$\pm$0.12 & \bf41.95\tiny$\pm$0.33 & \bf87.41\tiny$\pm$0.09 & \bf86.49\tiny$\pm$0.20 \\
\bottomrule
\end{tabular}
\end{table*}

\begin{table*}[!t]
\centering
\caption{OOD detection performance of multiple OE-based methods and their AOE-enhanced variants on CIFAR-10, CIFAR-100, and ImageNet-200. AOE generally improves the corresponding OE-based baselines in averaged near-OOD and far-OOD evaluations, with gains that are more pronounced in several far-OOD settings. These results demonstrate the effectiveness and general applicability of AOE as a plug-and-play enhancement for existing OE strategies.}
\label{tab:plug-play_oe}
\resizebox{\textwidth}{!}{
\begin{tabular}{lllcccccccc}
\toprule
ID Dataset & Type & Metric & OE & +AOE & MixOE & +AOE & DOE & +AOE & DAL & +AOE \\
\midrule
\multirow{4}{*}{CIFAR-10} 
& \multirow{2}{*}{Near-OOD} & FPR95 $\downarrow$ &19.84\tiny$\pm$0.95 &\bf17.44\tiny$\pm$0.05  &51.45\tiny$\pm$7.78 &\bf47.11\tiny$\pm$5.05  &20.39\tiny$\pm$0.15 &\bf18.96\tiny$\pm$0.78 &20.91\tiny$\pm$0.71 & \bf19.27\tiny$\pm$0.79 \\
&                           & AUROC $\uparrow$   &94.82\tiny$\pm$0.21 &\bf95.64\tiny$\pm$0.03  &88.73\tiny$\pm$0.82 &\bf89.07\tiny$\pm$0.32  &94.84\tiny$\pm$0.07 &\bf95.17\tiny$\pm$0.04 &94.42\tiny$\pm$0.30 & \bf94.69\tiny$\pm$0.22   \\
& \multirow{2}{*}{Far-OOD}  & FPR95 $\downarrow$ &13.13\tiny$\pm$0.53 &\bf10.62\tiny$\pm$1.50  &33.84\tiny$\pm$4.77 &\bf26.14\tiny$\pm$1.80  &15.59\tiny$\pm$1.47 &\bf13.46\tiny$\pm$1.21 & 21.40\tiny$\pm$2.56 & \bf20.32\tiny$\pm$0.27  \\
&                           & AUROC $\uparrow$   &96.00\tiny$\pm$0.13 &\bf97.04\tiny$\pm$0.47  &91.93\tiny$\pm$0.69 &\bf93.61\tiny$\pm$0.22  &94.67\tiny$\pm$0.69 &\bf96.16\tiny$\pm$0.10 &91.92\tiny$\pm$1.36 & \bf92.12\tiny$\pm$0.23  \\
\midrule
\multirow{4}{*}{CIFAR-100} 
& \multirow{2}{*}{Near-OOD} & FPR95 $\downarrow$ &30.73\tiny$\pm$0.11 &\bf30.09\tiny$\pm$0.21 &55.22\tiny$\pm$0.49  &\bf54.70\tiny$\pm$0.62  &37.84\tiny$\pm$1.05 &\bf32.21\tiny$\pm$2.68 & 35.28\tiny$\pm$1.13 & \bf32.16\tiny$\pm$0.85  \\
&                           & AUROC $\uparrow$   &88.30\tiny$\pm$0.10 &\bf88.32\tiny$\pm$0.10 &80.95\tiny$\pm$0.20  &\bf81.00\tiny$\pm$0.31  &86.61\tiny$\pm$0.29 &\bf87.57\tiny$\pm$1.29 & 85.97\tiny$\pm$0.07 & \bf87.61\tiny$\pm$0.33  \\
&  \multirow{2}{*}{Far-OOD} & FPR95 $\downarrow$ &54.82\tiny$\pm$2.79 &\bf46.12\tiny$\pm$2.26 &63.88\tiny$\pm$2.48  &\bf60.33\tiny$\pm$3.16  &45.38\tiny$\pm$0.52 &\bf43.64\tiny$\pm$3.87 & 47.33\tiny$\pm$1.28 & \bf41.59\tiny$\pm$1.59  \\
&                           & AUROC $\uparrow$   &81.41\tiny$\pm$1.49 &\bf86.04\tiny$\pm$0.38 &76.40\tiny$\pm$1.44  &\bf78.16\tiny$\pm$1.05  &84.30\tiny$\pm$0.81 &\bf86.56\tiny$\pm$3.88 & 82.42\tiny$\pm$0.26 & \bf85.80\tiny$\pm$0.78  \\
\midrule
\multirow{4}{*}{ImageNet-200} 
& \multirow{2}{*}{Near-OOD} & FPR95 $\downarrow$ &52.55\tiny$\pm$0.51 &\bf50.83\tiny$\pm$0.14 &57.95\tiny$\pm$0.23  &\bf54.82\tiny$\pm$3.47  &54.14\tiny$\pm$0.51 &\bf50.34\tiny$\pm$0.11 & 51.85\tiny$\pm$0.40 &\bf50.78\tiny$\pm$0.36  \\
&                           & AUROC $\uparrow$   &84.51\tiny$\pm$0.21 &\bf85.57\tiny$\pm$0.07 &82.57\tiny$\pm$0.23  &\bf83.83\tiny$\pm$1.34  &83.23\tiny$\pm$0.60 &\bf85.60\tiny$\pm$0.01 & 85.17\tiny$\pm$0.05 &\bf86.20\tiny$\pm$0.06  \\
&  \multirow{2}{*}{Far-OOD} & FPR95 $\downarrow$ &35.18\tiny$\pm$0.63 &\bf33.08\tiny$\pm$0.53 &40.12\tiny$\pm$0.66  &\bf37.98\tiny$\pm$3.23  &37.60\tiny$\pm$2.95 &\bf34.68\tiny$\pm$0.02 & 35.76\tiny$\pm$0.14 &\bf34.60\tiny$\pm$0.10  \\
&                           & AUROC $\uparrow$   &88.21\tiny$\pm$0.19 &\bf89.24\tiny$\pm$0.12 &88.39\tiny$\pm$0.02  &\bf88.67\tiny$\pm$0.18  &88.24\tiny$\pm$1.45 &\bf88.81\tiny$\pm$0.25 & 88.55\tiny$\pm$0.05 &\bf88.60\tiny$\pm$0.06  \\
\bottomrule
\end{tabular}
}
\end{table*}

\noindent\textbf{Extension to Different OE Strategies.} AOE seamlessly adapts to diverse OE strategies. For methods focused on generating synthetic OOD samples~\cite{DAL:conf/nips/WangFZLLH23}, we directly adopt the objective defined in Eq.~\eqref{eq:ood_objective}. Conversely, when leveraging auxiliary outlier classes~\cite{ ocl:conf/aaai/MiaoP0LZ24}, we align the predictive distribution over ID categories with a softened target distribution obtained via temperature scaling. Accordingly, Eq.~\eqref{eq:ood_objective} can be reformulated as follows:
\begin{equation}
\begin{aligned}
\min_{T}\;\;
&\mathcal{L}_{\mathrm{align}}
\big(s(f_{1:K}(x_o;\theta)/T), \mathcal{U} \big)
\\
&+
\mathcal{L}_{\mathrm{align}}
\big(s(f_{1:K}(x_o;\theta)), s(f_{1:K}(x_o;\theta)/T) \big),
\end{aligned}
\end{equation}
where $f_{1:K}(\cdot)$ denotes the logits corresponding to the ID categories and $K$ represents the number of ID categories.

\section{Experiments}
\subsection{Experimental Setup}
To ensure a fair comparison, we follow established protocols and assess our approach on the OpenOODv1.5 benchmark~\cite{openood:journals/corr/abs-2306-09301}. The evaluation spans both small-scale and large-scale datasets, encompassing near-OOD characterized by semantic shifts as well as far-OOD involving more pronounced covariate shifts.

\begin{table*}[t]
\centering
\caption{Comparison on CIFAR benchmarks. All values are percentages, and OOD detection results are averaged over multiple OOD test datasets. The best results are in \textbf{bold}, with the second-best \underline{underlined}. Detailed results for each OOD dataset are provided in Appendix. $\uparrow$ indicates that larger values are better, while $\downarrow$ indicates that smaller values are better. \textit{AOE-At} denotes alternating training, while \textit{AOE-Jt} denotes joint training.}
\label{tab:cifar_ood}
\resizebox{\textwidth}{!}{
\begin{tabular}{lcccccccccc}
\toprule
 & \multicolumn{5}{c}{\textbf{CIFAR-10}} & \multicolumn{5}{c}{\textbf{CIFAR-100}} \\
\cmidrule(lr){2-6} \cmidrule(lr){7-11}
Method
& \multicolumn{2}{c}{Near-OOD}
& \multicolumn{2}{c}{Far-OOD}
& ID ACC$\uparrow$
& \multicolumn{2}{c}{Near-OOD}
& \multicolumn{2}{c}{Far-OOD}
& ID ACC$\uparrow$ \\
 & FPR95$\downarrow$ & AUROC$\uparrow$
 & FPR95$\downarrow$ & AUROC$\uparrow$
 & 
 & FPR95$\downarrow$ & AUROC$\uparrow$
 & FPR95$\downarrow$ & AUROC$\uparrow$
 &  \\
\midrule
\multicolumn{11}{c}{\textit{Post-hoc}} \\
MSP        & 48.17\tiny$\pm$3.92 & 88.03\tiny$\pm$0.25 & 31.72\tiny$\pm$1.84 & 90.73\tiny$\pm$0.43 & \bf95.06\tiny$\pm$0.30 & 54.80\tiny$\pm$0.33 & 80.27\tiny$\pm$0.11 & 58.70\tiny$\pm$1.06 & 77.76\tiny$\pm$0.44 & \bf77.25\tiny$\pm$0.10 \\
Energy     & 61.34\tiny$\pm$4.63 & 87.58\tiny$\pm$0.46 & 41.69\tiny$\pm$5.32 & 91.21\tiny$\pm$0.92 & \underline{95.06\tiny$\pm$0.30} & 55.62\tiny$\pm$0.61 & 80.91\tiny$\pm$0.08 & 56.59\tiny$\pm$1.38 & 79.77\tiny$\pm$0.61 & \underline{77.25\tiny$\pm$0.10} \\
\midrule
\multicolumn{11}{c}{\textit{Training-time regularization}} \\
LogitNorm  & 29.34\tiny$\pm$0.81 & 92.33\tiny$\pm$0.08 & 13.81\tiny$\pm$0.20 & 96.74\tiny$\pm$0.06 & 94.30\tiny$\pm$0.25 & 62.89\tiny$\pm$0.57 & 78.47\tiny$\pm$0.31 & 53.61\tiny$\pm$3.45 & 81.53\tiny$\pm$1.26 & 76.34\tiny$\pm$0.17 \\
T2FNorm    & 26.47\tiny$\pm$0.35 & 92.79\tiny$\pm$0.13 & 12.75\tiny$\pm$0.73 & 96.98\tiny$\pm$0.23 & 94.69\tiny$\pm$0.07 & 58.47\tiny$\pm$1.35 & 79.84\tiny$\pm$0.40 & 51.25\tiny$\pm$2.52 & 82.73\tiny$\pm$1.01 & 76.43\tiny$\pm$0.13 \\
UM         & 33.12\tiny$\pm$0.47 & 90.60\tiny$\pm$0.37 & 22.95\tiny$\pm$1.65 & 93.67\tiny$\pm$0.68 & 92.33\tiny$\pm$0.41 & 65.86\tiny$\pm$2.06 & 77.14\tiny$\pm$0.62 & 51.90\tiny$\pm$2.04 & 81.63\tiny$\pm$1.70 & 72.21\tiny$\pm$0.43 \\
UMAP       & 33.01\tiny$\pm$0.06 & 91.00\tiny$\pm$0.07 & 21.70\tiny$\pm$1.57 & 94.20\tiny$\pm$0.36 & 95.06\tiny$\pm$0.30 & 59.71\tiny$\pm$0.65 & 79.49\tiny$\pm$0.23 & 52.11\tiny$\pm$2.36 & 81.62\tiny$\pm$1.37 & 77.25\tiny$\pm$0.10 \\
PSKD       & 31.67\tiny$\pm$0.78 & 91.71\tiny$\pm$0.16 & 20.48\tiny$\pm$1.30 & 94.56\tiny$\pm$0.29 & 95.14\tiny$\pm$0.08 & 54.83\tiny$\pm$2.79 & 81.45\tiny$\pm$0.44 & 51.56\tiny$\pm$2.39 & 82.40\tiny$\pm$0.52 & 77.44\tiny$\pm$0.09 \\
\midrule
\multicolumn{11}{c}{\textit{Outlier Exposure}} \\
MixOE          & 51.45\tiny$\pm$7.78 & 88.73\tiny$\pm$0.82 & 33.84\tiny$\pm$4.77 & 91.93\tiny$\pm$0.69 & 94.55\tiny$\pm$0.32 & 55.22\tiny$\pm$0.49 & 80.95\tiny$\pm$0.20 & 63.88\tiny$\pm$2.48 & 76.40\tiny$\pm$1.44 & 75.13\tiny$\pm$0.06 \\
DOE            & 20.39\tiny$\pm$0.15 & 94.84\tiny$\pm$0.07 & 15.59\tiny$\pm$1.47 & 94.67\tiny$\pm$0.69 & 94.32\tiny$\pm$0.19 & 37.84\tiny$\pm$1.05 & 86.61\tiny$\pm$0.29 & \bf45.38\tiny$\pm$0.52 & 84.30\tiny$\pm$0.81 & 75.69\tiny$\pm$0.26 \\
DAL            & 20.91\tiny$\pm$0.71 & 94.42\tiny$\pm$0.30 & 21.40\tiny$\pm$2.56 & 91.92\tiny$\pm$1.36 & 92.95\tiny$\pm$0.54 & 35.28\tiny$\pm$1.13 & 85.97\tiny$\pm$0.07 & 47.33\tiny$\pm$1.28 & 82.42\tiny$\pm$0.26 & 75.60\tiny$\pm$0.34 \\
OE          & 19.84\tiny$\pm$0.95 & 94.82\tiny$\pm$0.21 & 13.13\tiny$\pm$0.53 & 96.00\tiny$\pm$0.13 & 94.63\tiny$\pm$0.26 & 30.73\tiny$\pm$0.11 & 88.30\tiny$\pm$0.10 & 54.82\tiny$\pm$2.79 & 81.41\tiny$\pm$1.49 & 76.84\tiny$\pm$0.42 \\
OCL            & 22.64\tiny$\pm$0.88 & 94.84\tiny$\pm$0.24 & 14.44\tiny$\pm$1.21 & \underline{96.71\tiny$\pm$0.30} & 93.83\tiny$\pm$0.32 & \underline{30.49\tiny$\pm$0.47} & \bf88.91\tiny$\pm$0.16 & 51.10\tiny$\pm$3.12 & 83.67\tiny$\pm$1.26 & 75.96\tiny$\pm$0.20 \\
\midrule
\textit{AOE-At}      & \underline{18.37\tiny$\pm$0.57} & \underline{95.21\tiny$\pm$0.09} & \underline{11.88\tiny$\pm$0.15} & 96.25\tiny$\pm$0.42 & 94.82\tiny$\pm$0.19 & 30.56\tiny$\pm$0.14 & 88.24\tiny$\pm$0.07 & \underline{45.71\tiny$\pm$2.88} & \underline{85.97\tiny$\pm$1.06} & 76.21\tiny$\pm$0.01 \\
\textit{AOE-Jt}      & \bf17.44\tiny$\pm$0.05 & \bf95.64\tiny$\pm$0.03 & \bf10.62\tiny$\pm$1.50 & \bf97.04\tiny$\pm$0.47 & 94.79\tiny$\pm$0.15 & \bf30.09\tiny$\pm$0.21 & \underline{88.32\tiny$\pm$0.10} & 46.12\tiny$\pm$2.26 & \bf86.04\tiny$\pm$0.38 & 76.90\tiny$\pm$0.14 \\
\bottomrule
\end{tabular}}
\end{table*}

\noindent\textbf{Datasets:} Small-scale evaluations are conducted using CIFAR-10 and CIFAR-100 \cite{cifar:journals/corr/abs-1811-07270} as in-distribution datasets. The near-OOD datasets include CIFAR-100/10 and Tiny ImageNet \cite{tiny-imagenet:journals/corr/abs-1904-10429}, while the far-OOD datasets comprise MNIST \cite{mnist:journals/spm/Deng12}, SVHN \cite{SVHN:conf/icit2/PradhanTMG24}, Textures \cite{textures:conf/3dpvt/GoolVKTZ02}, and Places365 \cite{zhou2017places}. In addition, Tiny ImageNet-597 \cite{tiny-imagenet:journals/corr/abs-1904-10429}, with no category overlap with CIFAR-10/100 or the OOD datasets, is employed as a realistic outlier set following the OpenOOD evaluation protocol. For large-scale experiments, ImageNet-200, consisting of 200 categories sampled from ImageNet-1K \cite{imagenet:conf/cvpr/DengDSLL009}, serves as the in-distribution dataset. The near-OOD evaluation utilizes SSB-hard \cite{SSB:conf/iclr/Vaze0VZ22} and NINCO \cite{NINCO}, whereas the far-OOD evaluation considers iNaturalist \cite{iNaturalist:conf/cvpr/HornASCSSAPB18}, Textures \cite{textures:conf/3dpvt/GoolVKTZ02}, and OpenImage-O \cite{vim:conf/cvpr/Wang0F022}. Furthermore, the remaining 800 classes from ImageNet-1K are used as realistic outliers.

\noindent\textbf{Implementation Details:} To ensure fair evaluation, we follow the standardized OOD detection protocol provided by the OpenOOD benchmark~\cite{openood:journals/corr/abs-2306-09301}. ResNet-18~\cite{resnet18:conf/cvpr/HeZRS16} is adopted as the primary backbone. All models are trained for 100 epochs using stochastic gradient descent with an initial learning rate of 0.1, cosine annealing scheduling~\cite{rate_scle:conf/iclr/LoshchilovH17}, a momentum coefficient of 0.9, and a weight decay of \(5 \times 10^{-4}\). The batch size is set to 128 for CIFAR-10 and -100, and 256 for ImageNet-200. During training, the temperature \(T\) is constrained within the range \([1.0, 10]\), and values outside this interval are clipped accordingly.

\begin{figure*}[!t]
    \centering
    \begin{minipage}{0.32\linewidth}
        \centering
        \includegraphics[width=\linewidth]{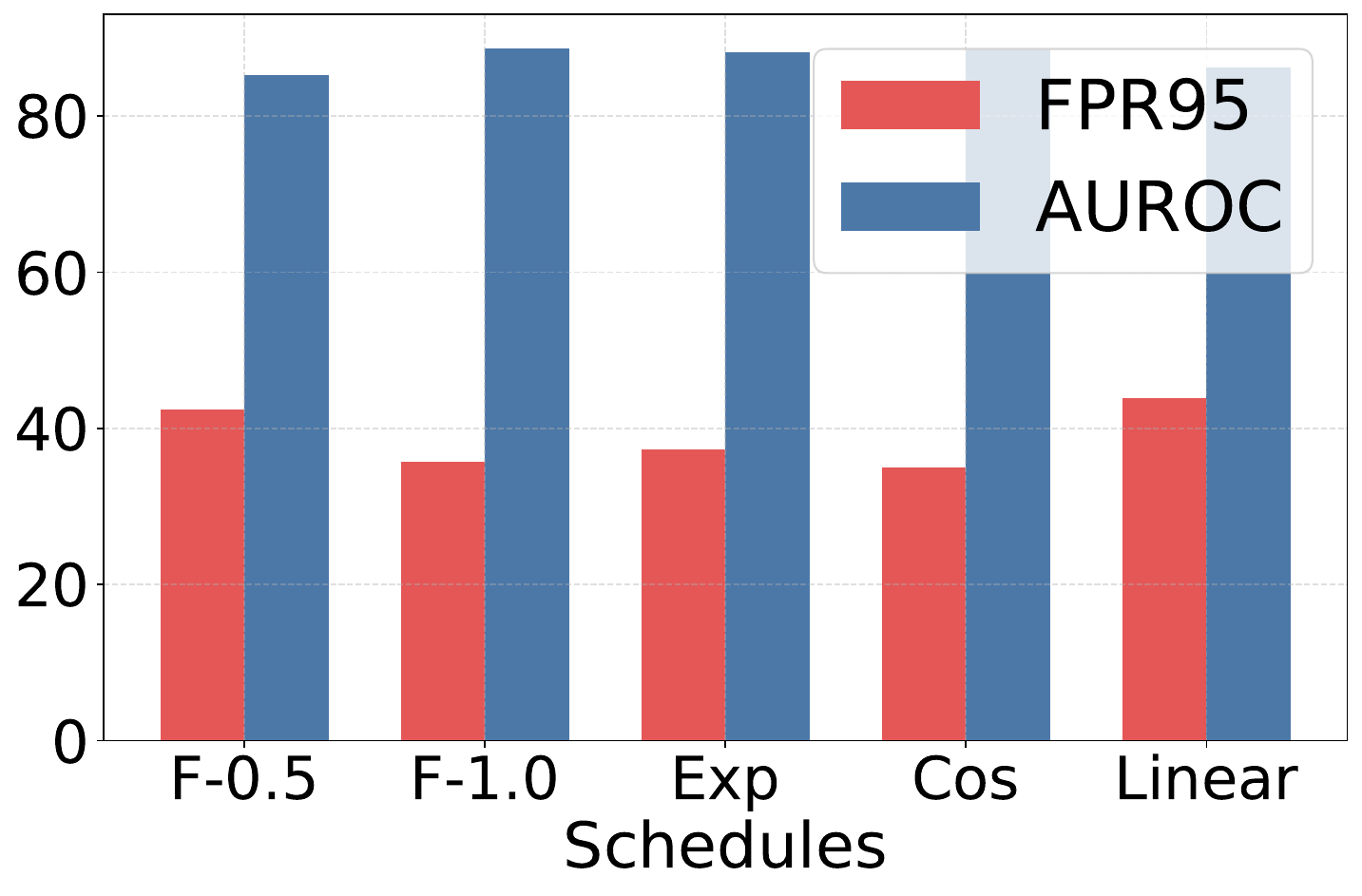}
        (a) Adjustment strategies of $\alpha$
    \end{minipage}
    \begin{minipage}{0.32\linewidth}
        \centering
        \includegraphics[width=\linewidth]{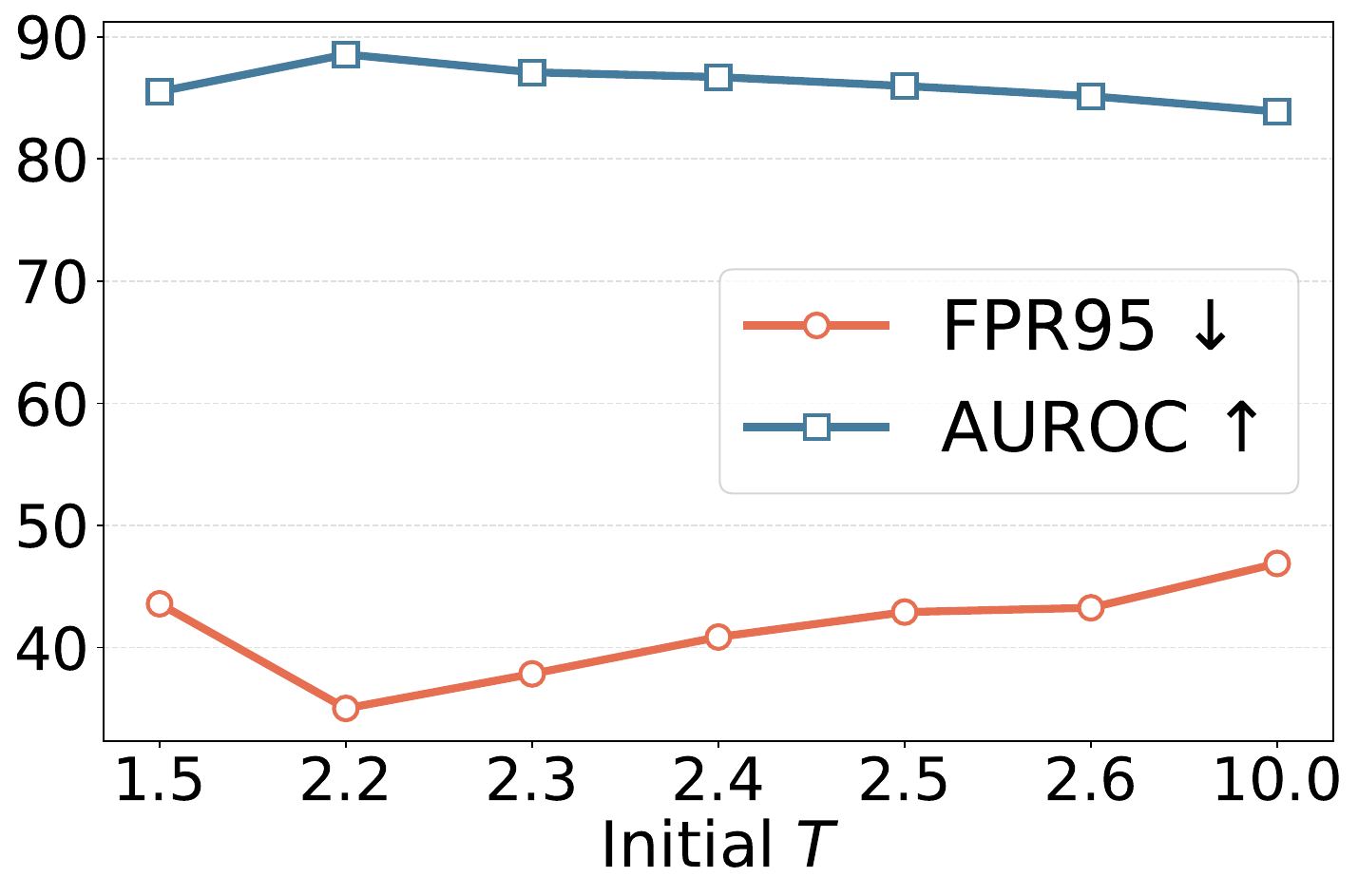}
        (b) Temperature initialization
    \end{minipage}
    \begin{minipage}{0.32\linewidth}
        \centering
        \includegraphics[width=\linewidth]{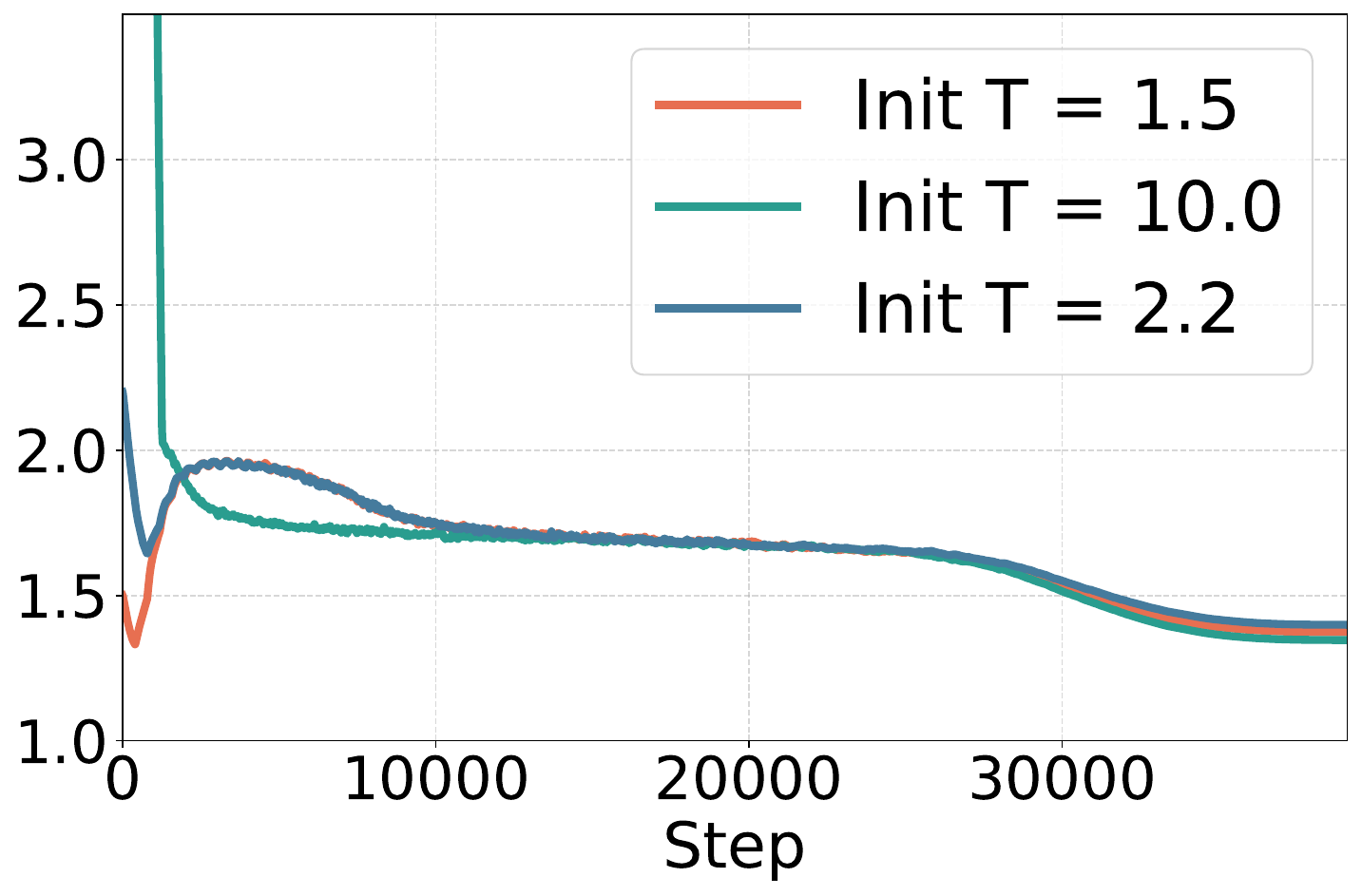} 
        (c) Evolution of $T$
    \end{minipage}
    \caption{Ablation study on key design choices of AOE. CIFAR-100 is used as the ID dataset. (a) Effect of different adjustment strategies for $\alpha$. (b) Impact of temperature initialization. (c) Evolution of the learned temperature $T$ during training, showing a consistent convergence pattern.}
    \label{fig:ablation}
\end{figure*}

\noindent\textbf{Evaluation Metrics:} Following the setting of \cite{OE:conf/iclr/HendrycksMD19}, we employ FPR95, AUROC and ID ACC for evaluation, where FPR95 refers to the false positive rate of OOD samples when the true positive rate of ID samples is fixed at 95\%, and AUROC is a widely used metric that quantifies the overall separability between ID and OOD samples, based on the receiver operating characteristic curve. 

\noindent\textbf{Baselines:} We select three categories of OOD detection methods for experiments: (1) Post-hoc methods, including MSP~\cite{MSP:conf/iclr/HendrycksG17} and  Energy~\cite{Energy:conf/nips/LiuWOL20}; (2) Training-time regularization methods, including LogitNorm~\cite{LogitNorm:conf/icml/WeiXCF0L22}, T2FNorm~\cite{T2FNorm:conf/cvpr/RegmiPDGSB22}, UM/UMAP~\cite{UM:conf/icml/ZhuLYLX023}, and PSKD~\cite{pskd:conf/icml/0074X25}; (3) Outlier exposure methods, including OE~\cite{OE:conf/iclr/HendrycksMD19}, MixOE~\cite{MixOE:conf/wacv/ZhangILCL23}, DAL~\cite{DAL:conf/nips/WangFZLLH23}, DOE~\cite{DOE:conf/iclr/WangY0DKLH023} and OCL \cite{ocl:conf/aaai/MiaoP0LZ24}. Furthermore, we explore the impact of different score functions, including ODIN~\cite{ODIN:conf/iclr/LiangLS18}, ASH~\cite{ASH:conf/iclr/DjurisicBAL23}, REACT~\cite{react:conf/nips/SunGL21}, DICE~\cite{dice:conf/eccv/SunL22a}, SCALE~\cite{scale:conf/iclr/XuCFY24}, SHE~\cite{she:conf/iclr/ZhangF0DLWLH023}, GEN~\cite{gen:conf/cvpr/LiuLZ23} and RankFeat~\cite{song2024rankfeatweight}.

\begin{table*}[t]
\centering
\caption{OOD detection performance with a series of fixed temperature values on CIFAR-10 and CIFAR-100 using MSP. The best results are highlighted in \textbf{bold}, and the second-best are \underline{underlined}. $\uparrow$ indicates higher is better, while $\downarrow$ indicates lower is better.}
\label{tab:pseudo_label}
\resizebox{\textwidth}{!}{
\begin{tabular}{lcccccccccc}
\toprule
 & \multicolumn{5}{c}{\textbf{CIFAR-10}} & \multicolumn{5}{c}{\textbf{CIFAR-100}} \\
\cmidrule(lr){2-6} \cmidrule(lr){7-11}
Method
& \multicolumn{2}{c}{Near-OOD}
& \multicolumn{2}{c}{Far-OOD}
& ID ACC$\uparrow$
& \multicolumn{2}{c}{Near-OOD}
& \multicolumn{2}{c}{Far-OOD}
& ID ACC$\uparrow$ \\
 & FPR95$\downarrow$ & AUROC$\uparrow$
 & FPR95$\downarrow$ & AUROC$\uparrow$
 & 
 & FPR95$\downarrow$ & AUROC$\uparrow$
 & FPR95$\downarrow$ & AUROC$\uparrow$
 &  \\
\midrule
Random-Hard          & 20.54\tiny$\pm$0.13 & 94.76\tiny$\pm$0.20 & 15.47\tiny$\pm$0.91 & 95.44\tiny$\pm$0.69 & {94.53\tiny$\pm$0.19} & 30.99\tiny$\pm$0.36 & \bf{88.49\tiny$\pm$0.13} & 51.23\tiny$\pm$2.16 & 83.60\tiny$\pm$1.29 & {76.79\tiny$\pm$0.40} \\
Random-Soft            & 19.20\tiny$\pm$1.09 & 95.01\tiny$\pm$0.35 & 14.04\tiny$\pm$0.61 & 95.38\tiny$\pm$0.27 & 94.48\tiny$\pm$0.25 & \underline{30.39\tiny$\pm$0.19} & \underline{88.36\tiny$\pm$0.11} & 50.19\tiny$\pm$4.28 & 84.11\tiny$\pm$2.23 & \bf77.10\tiny$\pm$0.31 \\
Fix $T=3.5$            & 19.48\tiny$\pm$0.84 & 94.94\tiny$\pm$0.30 & 14.01\tiny$\pm$1.08 & 95.98\tiny$\pm$0.33 & 94.53\tiny$\pm$0.08 & 30.71\tiny$\pm$0.51 & 88.26\tiny$\pm$0.19 & 51.57\tiny$\pm$1.79 & 83.60\tiny$\pm$0.96 & \underline{76.81\tiny$\pm$0.38} \\
Fix $T=4.5$          & {18.70\tiny$\pm$0.93} & {95.07\tiny$\pm$0.31} & \underline{11.35\tiny$\pm$1.24} & \underline{96.61\tiny$\pm$0.42} & 94.50\tiny$\pm$0.26 & 31.92\tiny$\pm$1.82 & 87.49\tiny$\pm$0.99 & 46.30\tiny$\pm$9.58 & {85.48\tiny$\pm$3.89} & 76.35\tiny$\pm$0.98 \\
Fix $T=5.5$             & 19.37\tiny$\pm$1.08 & 94.93\tiny$\pm$0.35 & 13.89\tiny$\pm$0.65 & 95.63\tiny$\pm$0.51 & 94.66\tiny$\pm$0.11 & 32.42\tiny$\pm$2.31 & 87.41\tiny$\pm$1.01 & \bf{45.64\tiny$\pm$12.77} & 84.99\tiny$\pm$5.14 & 76.39\tiny$\pm$0.97 \\
\textit{AOE-At}      & \underline{18.37\tiny$\pm$0.57} & \underline{95.21\tiny$\pm$0.09} & {11.88\tiny$\pm$0.15} & 96.25\tiny$\pm$0.42 & \bf94.82\tiny$\pm$0.19 & 30.56\tiny$\pm$0.14 & 88.24\tiny$\pm$0.07 & \underline{45.71\tiny$\pm$2.88} & \underline{85.97\tiny$\pm$1.06} & 76.21\tiny$\pm$0.01 \\
\textit{AOE-Jt}      & \bf17.44\tiny$\pm$0.05 & \bf95.64\tiny$\pm$0.03 & \bf10.62\tiny$\pm$1.50 & \bf97.04\tiny$\pm$0.47 & \underline{94.79\tiny$\pm$0.15} & \bf30.09\tiny$\pm$0.21 & {88.32\tiny$\pm$0.10} & 46.12\tiny$\pm$2.26 & \bf86.04\tiny$\pm$0.38 & 76.90\tiny$\pm$0.14 \\
\bottomrule
\end{tabular}}
\end{table*}

\begin{table*}[t]
\centering
\caption{OOD detection performance of OE and its AOE-enhanced variant across different backbone architectures on CIFAR-100. AOE consistently improves performance across architectures, with more significant gains observed in far-OOD settings}
\label{tab:cifar100_backbone}
\begin{tabular}{lcccccccc}
\toprule
\multirow{3}{*}{\textbf{Backbone}} 
& \multicolumn{4}{c}{\textbf{Near-OOD}} 
& \multicolumn{4}{c}{\textbf{Far-OOD}} \\
\cmidrule(lr){2-5} \cmidrule(lr){6-9}
& \multicolumn{2}{c}{FPR95 $\downarrow$} & \multicolumn{2}{c}{AUROC $\uparrow$} 
& \multicolumn{2}{c}{FPR95 $\downarrow$} & \multicolumn{2}{c}{AUROC $\uparrow$} \\
& OE & AOE & OE & AOE & OE & AOE & OE & AOE \\
\midrule
WideResNet  &30.27\tiny$\pm$0.81 &30.02\tiny$\pm$0.36  &89.38\tiny$\pm$0.15 &89.34\tiny$\pm$0.06 &36.30\tiny$\pm$ 0.58 &35.04\tiny$\pm$0.82  &88.92\tiny$\pm$ 0.36 &89.68\tiny$\pm$0.32 \\
ResNet-18   &30.73\tiny$\pm$0.11  &30.09\tiny$\pm$0.21 &88.30\tiny$\pm$0.10 & 88.32\tiny$\pm$0.10 &54.82\tiny$\pm$2.79 &46.12\tiny$\pm$2.26 &81.41\tiny$\pm$1.49 &86.04\tiny$\pm$0.38 \\
DenseNet      &32.04\tiny$\pm$0.42 &31.62\tiny$\pm$0.52 & 87.05\tiny$\pm$0.34 &86.81\tiny$\pm$0.02 &52.34\tiny$\pm$2.37  &50.21\tiny$\pm$0.32 &81.84\tiny$\pm$0.57   &82.73\tiny$\pm$1.04 \\
\bottomrule
\end{tabular}
\end{table*}

\begin{figure}[!t]
    \centering
    \begin{minipage}[b]{0.475\linewidth}
        \centering
        \includegraphics[width=\linewidth]{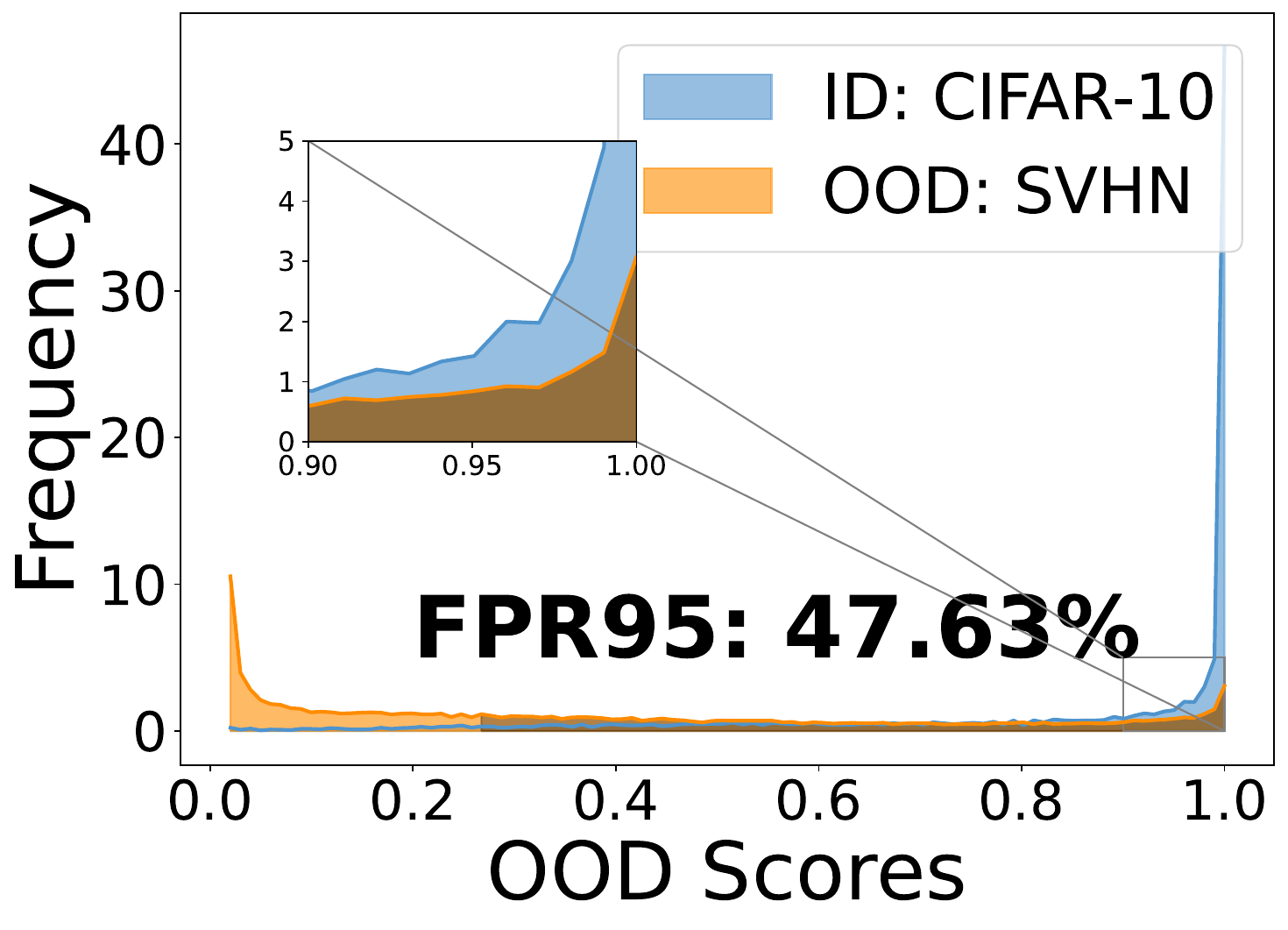}
        (a) OE
    \end{minipage}
    \begin{minipage}[b]{0.475\linewidth}
        \centering
        \includegraphics[width=\linewidth]{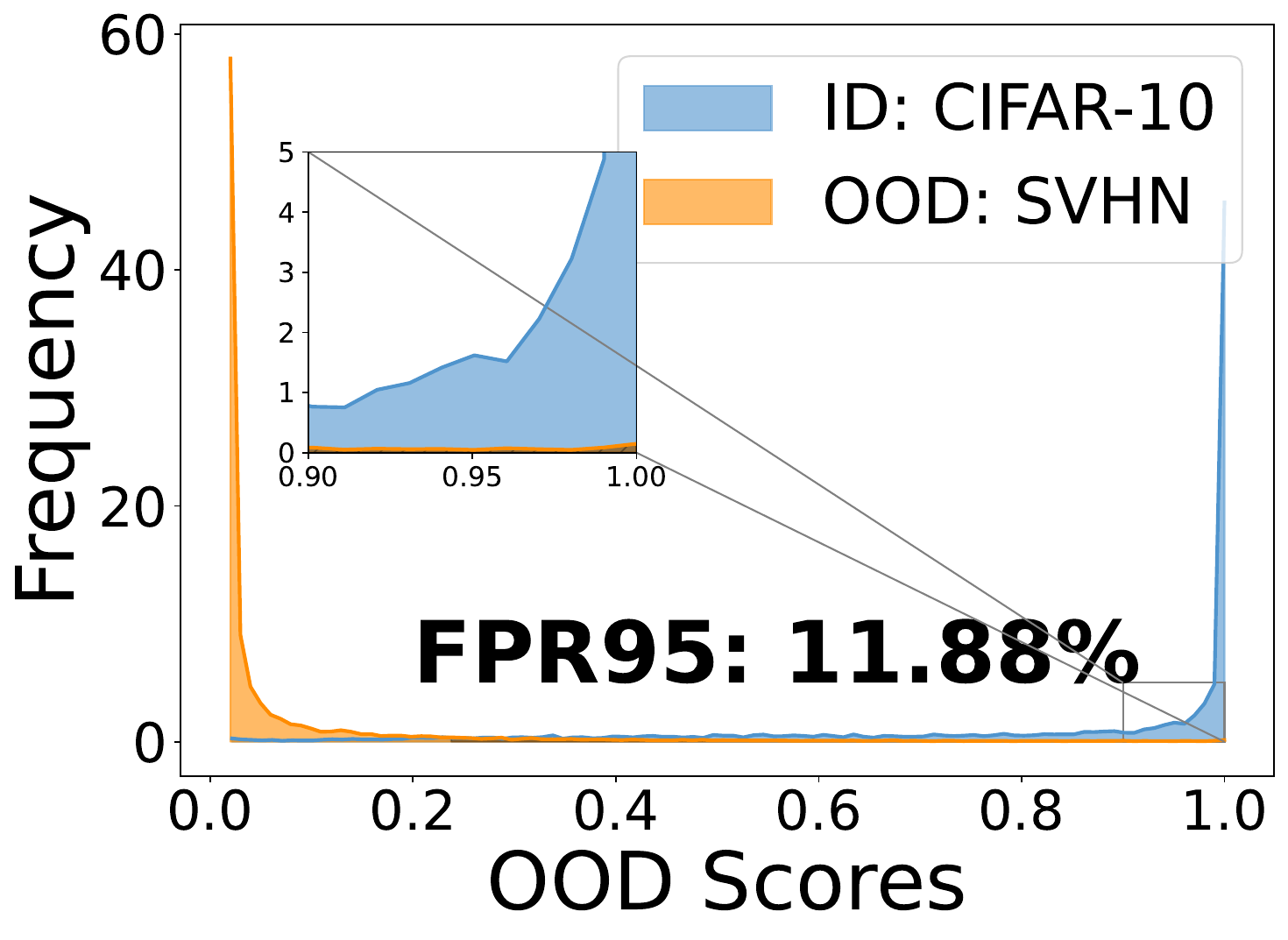}
        (b) AOE
    \end{minipage}\\
    \begin{minipage}[b]{0.475\linewidth}
        \centering
        \includegraphics[width=\linewidth]{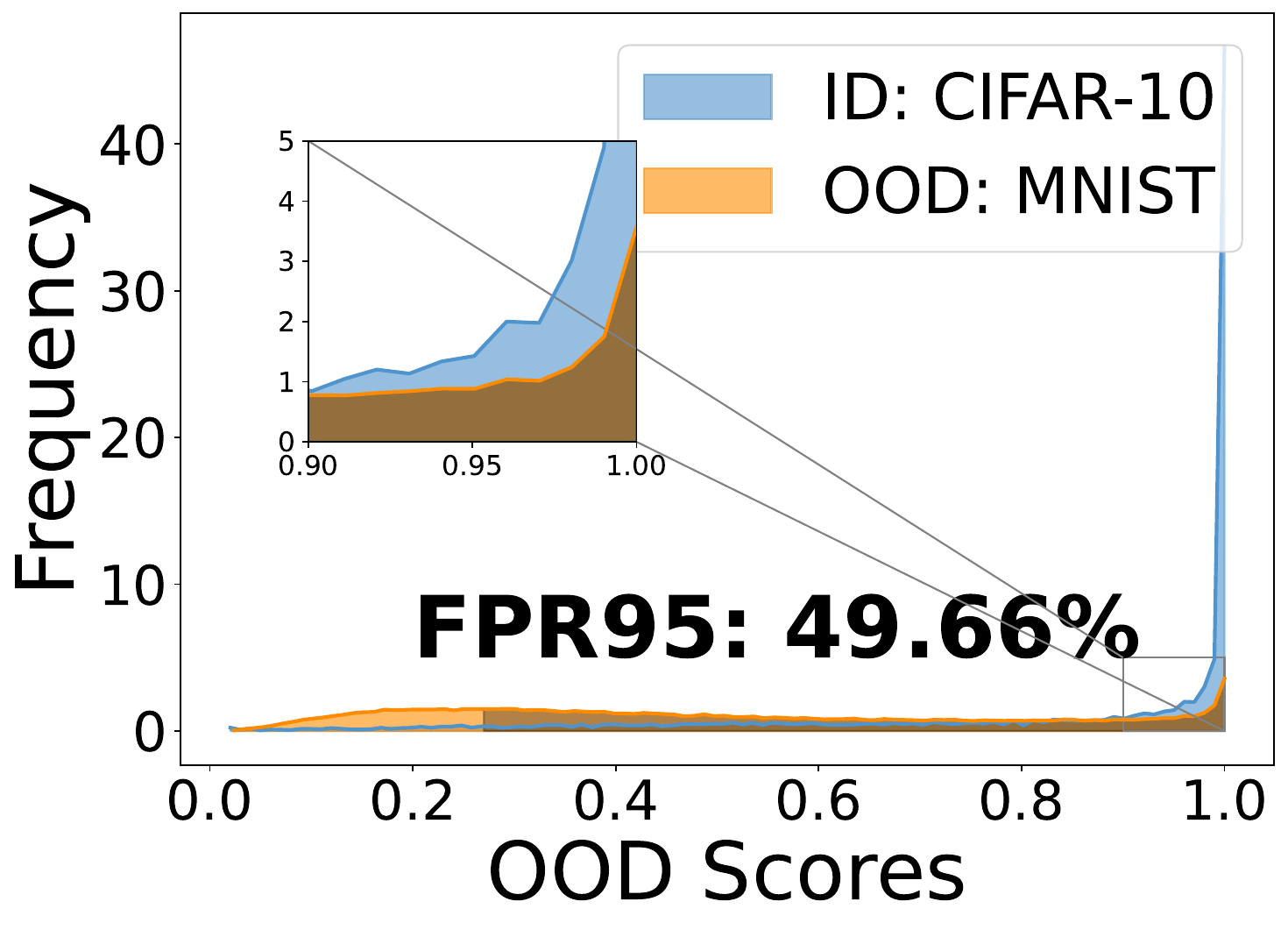}
        (c) OE
    \end{minipage}
    \begin{minipage}[b]{0.475\linewidth}
        \centering
        \includegraphics[width=\linewidth]{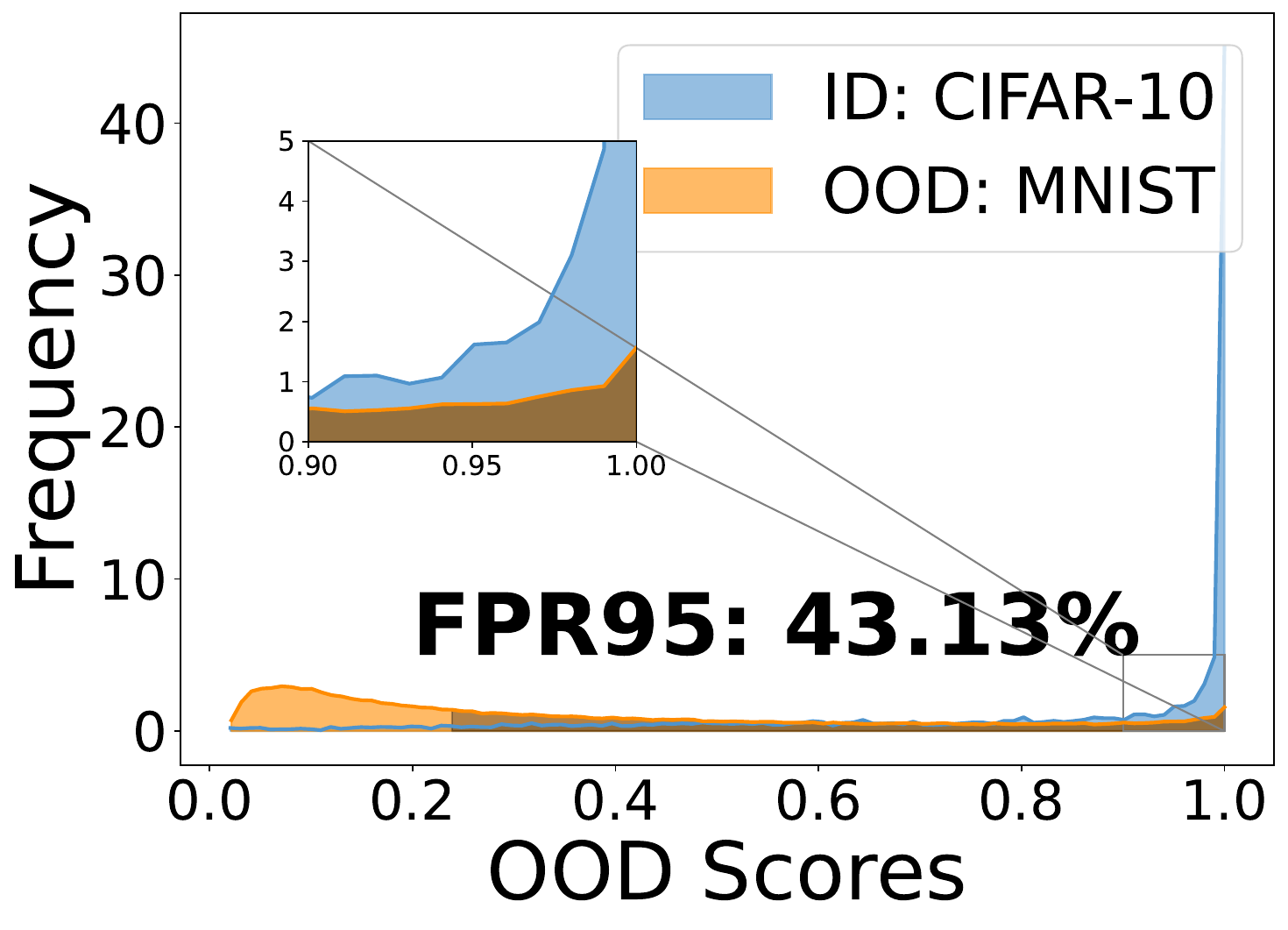}
        (d)  AOE
    \end{minipage}
    \caption{\revised{OOD score distributions of OE and AOE. CIFAR-10 is used as the ID dataset, while SVHN and MNIST are used as OOD datasets. The histograms compare the confidence score distributions of OOD samples under OE and AOE, with emphasis on the high-score region. Compared with OE, AOE produces fewer high-confidence OOD predictions and achieves lower FPR95, indicating improved separation between ID and OOD samples.}}
    \label{fig:ood_distribution}
\end{figure}

\subsection{Main Results}

We report the OOD detection performance of AOE and SOTA methods on large-scale datasets in \cref{tab:imagenet_ood} and small-scale datasets in \cref{tab:cifar_ood}. For all methods, the mean performance and standard deviation are computed over multiple independent runs with different random initializations. Several observations can be drawn from the experimental results. (1) OE-based methods, including our proposed AOE, generally outperform training-time regularization methods and score-based methods. This advantage can be primarily attributed to the incorporation of outlier samples during training, which enhances the model’s ability to discriminate OOD samples from ID samples. (2) However, conventional OE-based methods often compromise ID classification performance and fail to fully exploit the relations between OOD samples and ID categories, as theoretically analyzed in \cref{thm:delta_margin_upper}, particularly when assigning an additional class to OOD samples. In contrast, as stated in \cref{thm:T_mitigate}, our method effectively mitigates this over-softening issue, leading to improved overall performance. As a result, compared with existing methods, AOE achieves superior performance across nearly all evaluated scenarios. (3) \textit{AOE-Jt} of the temperature parameter \( T \) outperforms \textit{AOE-At}, as it allows the model to co-adapt the temperature alongside its feature representations, achieving more accurate alignment with ID decision boundaries. Additional plug-in results on existing OE methods are shown in \cref{tab:plug-play_oe}, confirming the generality and compatibility of AOE.

\begin{table*}[t]
\centering
\caption{OOD detection performance across different scoring functions on CIFAR-100. AOE consistently improves OOD detection across a wide range of scoring methods, achieving lower FPR95 and higher AUROC in most cases, with particularly significant gains in far-OOD scenarios.}
\label{tab:cifar100_ood_scores}
\resizebox{\textwidth}{!}{
\begin{tabular}{lcccccccccccc}
\toprule
\multirow{3}{*}{\textbf{OOD Scores}} 
& \multicolumn{6}{c}{\textbf{Near-OOD}} 
& \multicolumn{6}{c}{\textbf{Far-OOD}} \\
\cmidrule(lr){2-7} \cmidrule(lr){8-13}
& \multicolumn{3}{c}{FPR95 $\downarrow$} & \multicolumn{3}{c}{AUROC $\uparrow$} 
& \multicolumn{3}{c}{FPR95 $\downarrow$} & \multicolumn{3}{c}{AUROC $\uparrow$} \\
& OE & \textit{AOE-At} & \textit{AOE-Jt} & OE & \textit{AOE-At} & \textit{AOE-Jt} & OE & \textit{AOE-At} & \textit{AOE-Jt} & OE & \textit{AOE-At} & \textit{AOE-Jt}\\
\midrule
Energy & 30.34\tiny$\pm$0.21 & 30.49\tiny$\pm$0.33 & 30.28\tiny$\pm$0.71 & 88.42\tiny$\pm$0.05 & 88.25\tiny$\pm$0.21 & 88.18\tiny$\pm$0.37 & 50.59\tiny$\pm$0.76 & 39.26\tiny$\pm$2.26 & 41.10\tiny$\pm$1.38 & 84.10\tiny$\pm$0.85 & 88.34\tiny$\pm$0.86 & 87.86\tiny$\pm$0.28 \\
ODIN   & 40.22\tiny$\pm$0.66 & 39.09\tiny$\pm$2.58 & 38.46\tiny$\pm$0.23 & 86.19\tiny$\pm$0.51 & 86.46\tiny$\pm$0.78 & 86.82\tiny$\pm$0.13 & 49.08\tiny$\pm$3.40 & 45.79\tiny$\pm$2.25 & 48.96\tiny$\pm$1.75 & 84.78\tiny$\pm$1.32 & 86.70\tiny$\pm$0.38 & 85.93\tiny$\pm$0.45 \\
ASH    & 36.46\tiny$\pm$1.41 & 35.83\tiny$\pm$0.79 & 36.88\tiny$\pm$0.84& 85.74\tiny$\pm$1.23 & 85.58\tiny$\pm$0.62 & 85.08\tiny$\pm$0.66 & 55.53\tiny$\pm$0.84 & 42.49\tiny$\pm$2.14 & 44.72\tiny$\pm$1.76 & 82.78\tiny$\pm$1.01 & 87.10\tiny$\pm$0.92 & 86.29\tiny$\pm$0.62  \\
REACT  & 31.09\tiny$\pm$1.39 & 33.37\tiny$\pm$2.10 & 31.46\tiny$\pm$1.64 & 87.92\tiny$\pm$0.49 & 86.80\tiny$\pm$1.33 & 87.60\tiny$\pm$0.62 & 49.62\tiny$\pm$0.98 & 40.33\tiny$\pm$0.59 & 40.72\tiny$\pm$2.51 & 84.36\tiny$\pm$0.85 & 88.19\tiny$\pm$0.49 & 88.03\tiny$\pm$0.37  \\
DICE   & 31.44\tiny$\pm$0.12 & 31.32\tiny$\pm$0.66 & 31.08\tiny$\pm$1.30 & 87.32\tiny$\pm$0.16 & 87.34\tiny$\pm$0.32 & 87.21\tiny$\pm$0.50 & 49.45\tiny$\pm$0.63 & 37.10\tiny$\pm$1.93 & 40.14\tiny$\pm$0.83 & 84.25\tiny$\pm$0.64 & 88.38\tiny$\pm$0.88 & 87.63\tiny$\pm$0.15  \\
SCALE  & 36.98\tiny$\pm$0.29 & 30.74\tiny$\pm$0.94 & 32.86\tiny$\pm$3.30 & 85.71\tiny$\pm$0.10 & 88.11\tiny$\pm$0.32 & 87.13\tiny$\pm$1.17 & 55.49\tiny$\pm$1.52 & 38.27\tiny$\pm$1.89 & 41.16\tiny$\pm$3.64 & 83.43\tiny$\pm$0.07 & 88.69\tiny$\pm$0.84 & 87.85\tiny$\pm$0.94 \\
SHE    & 32.47\tiny$\pm$0.26 & 31.94\tiny$\pm$1.05 & 32.31\tiny$\pm$1.43 & 86.67\tiny$\pm$0.15 & 86.81\tiny$\pm$0.46 & 86.35\tiny$\pm$0.65 & 55.92\tiny$\pm$1.44 & 41.86\tiny$\pm$1.64 & 44.47\tiny$\pm$1.30 & 82.24\tiny$\pm$1.02 & 87.13\tiny$\pm$0.82 & 86.44\tiny$\pm$0.13 \\
GEN    & 30.18\tiny$\pm$0.31 & 30.48\tiny$\pm$0.32 & 30.25\tiny$\pm$0.72 & 88.54\tiny$\pm$0.21 & 88.37\tiny$\pm$0.23 & 88.20\tiny$\pm$0.37 & 51.05\tiny$\pm$1.11 & 40.01\tiny$\pm$2.98 & 41.14\tiny$\pm$1.40 & 83.97\tiny$\pm$0.80 & 88.24\tiny$\pm$0.98 & 87.86\tiny$\pm$0.29 \\
RankFeat    & 50.38\tiny$\pm$10.08 & 48.37\tiny$\pm$1.81 & 50.33\tiny$\pm$0.68 & 77.64\tiny$\pm$4.37 & 76.86\tiny$\pm$1.16 & 77.78\tiny$\pm$1.83 & 90.43\tiny$\pm$3.29 & 65.95\tiny$\pm$12.66 & 70.65\tiny$\pm$13.77& 55.36\tiny$\pm$8.49 & 73.44\tiny$\pm$8.62 & 68.82\tiny$\pm$12.74 \\
\bottomrule
\end{tabular}
}
\end{table*}

\subsection{Ablation Study}

\noindent\textbf{Impact of Adjustment Strategies for $\alpha$.} Balancing coefficient $\alpha$ serves as the most influential factor for our model’s effectiveness. As discussed in \cref{sec:algo}, we investigate strategies for adjusting the value of $\alpha$ during different stages of training. We designed four scheduling strategies for the parameter $\alpha$: (1) a fixed-value strategy, i.e., $\alpha_t=c$, where $t$ and $c$ denote the epoch and constant; (2) an exponential scheduling strategy, i.e., $\alpha_t=1-e^{-t/35}$; (3) a cosine annealing strategy, i.e., $\alpha_t=0.5-\mathrm{cos}((t+1)\pi/100)/2$; and (4) a linear scheduling strategy, i.e., $\alpha_t=\min\{1,(t+1)/100\}$. For the first strategy, we set $c=\{0.5,1\}$ for experiments. The results are shown in \cref{fig:ablation}(a), where ``F-0.5'' and ``F-1.0'' denote the fixed-value strategy with different $c$, ``Exp'', ``Cos'', and ``Linear'' respectively denote the exponential scheduling strategy, cosine annealing strategy and linear scheduling strategy. We can find that different scheduling strategies for $\alpha$ result in comparable performance, with the cosine annealing strategy achieving the best overall results.

\noindent\textbf{Impact of Temperature Initialization.} Temperature $T$ is treated as a learnable parameter. We report the OOD detection performance with different initialization values for $T\in[1.5,10]$ in \cref{fig:ablation}(b). Furthermore, we illustrate the change of $T$ during model training in \cref{fig:ablation}(c). The results reveal that the initial value of \( T \) significantly affects the final performance. Interestingly, regardless of the initialization, the learned temperature consistently follows a similar trajectory, converging to a range between 1.5 and 2, and then gradually decreasing towards the end of training. This indicates that early-stage temperature values play a critical role in guiding effective optimization.

\begin{figure}[!t]
    \centering
    \begin{minipage}[b]{0.98\linewidth}
        \centering
        \includegraphics[width=0.24\linewidth]{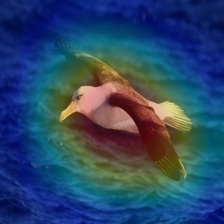}
        \includegraphics[width=0.24\linewidth]{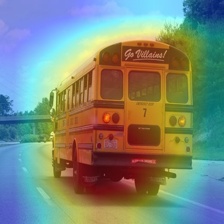}
        \includegraphics[width=0.24\linewidth]{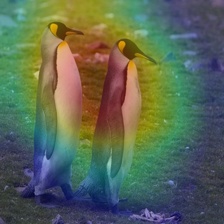}
        \includegraphics[width=0.24\linewidth]{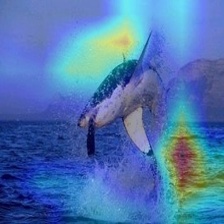}
    \end{minipage}
    (a) OE
    \begin{minipage}[b]{0.98\linewidth}
        \centering
        \includegraphics[width=0.24\linewidth]{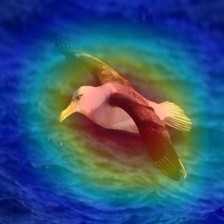}
        \includegraphics[width=0.24\linewidth]{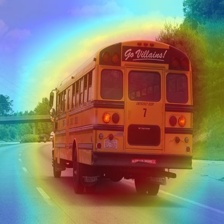}
        \includegraphics[width=0.24\linewidth]{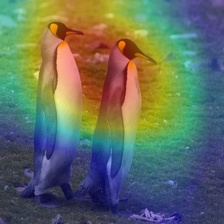}
        \includegraphics[width=0.24\linewidth]{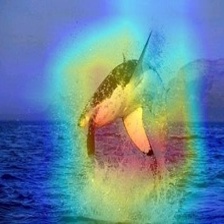}
    \end{minipage}
    (b) AOE
    \caption{\revised{Grad-CAM visualizations for OOD samples under OE and AOE. Compared with OE, AOE produces more focused and semantically meaningful activation regions, indicating improved uncertainty modeling for OOD samples.}}
    \label{fig:grad_cam_visual}
\end{figure}

\subsection{Further Analysis}
\noindent\textbf{Impact of Learnable Temperature.} We evaluate the impact of a learnable temperature by comparing AOE with alternative pseudo-labeling strategies, including random hard labels, random soft labels, and fixed temperature coefficients. The results indicate that AOE's performance gains do not stem from arbitrary label smoothing, but from preserving relations during calibration. As shown in \cref{tab:pseudo_label}, random hard and soft labels yield only marginal improvements over standard OE, while fixed-temperature calibration offers moderate gains but suffers from limited robustness. In contrast, AOE, with adaptive pseudo-labels and a learnable temperature, consistently outperforms all baselines, underscoring the effectiveness of relations-preserving calibration for reliable OOD detection.

\noindent\textbf{Impact of OOD Score Function.} Since our study is orthogonal to the choice of score functions, we investigate the impact of different scoring functions on the final OOD detection performance in this section. We report the results on the CIFAR-100 benchmark with various score functions in \cref{tab:cifar100_ood_scores}. We observe that standard OE-based methods exhibit degraded performance when combined with different scoring functions. This degradation arises because these methods distort the model’s confidence landscape, resulting in misaligned score distributions at test time. In contrast, AOE achieves consistently strong performance across diverse scoring functions by preserving relations during training, thereby ensuring better compatibility with OOD score.

\noindent\revised{\textbf{Impact of Auxiliary Outlier Sources.}
To examine the influence of auxiliary outlier sources, we evaluate two representative sources widely used in OE-based studies, including 300K random images sampled from 80 Million Tiny Images~\cite{macs} and ImageNet-RC~\cite{POME}. As shown in \cref{fig:futher}(a), AOE maintains a consistent performance advantage over OE under different auxiliary outlier sources. This observation suggests that the improvement is not merely caused by a particular outlier exposure dataset, but is more closely related to the proposed recalibration of outlier labels, which adaptively preserves the predictive relations between auxiliary outliers and ID categories.
}

\noindent\revised{\textbf{Impact of the Number of Auxiliary Outliers.}
We further investigate the robustness of AOE to the number of exposed auxiliary outliers. Specifically, we randomly discard auxiliary OOD samples according to different mask ratios and compare AOE with OE under the same training protocol. As shown in \cref{fig:futher}(b), AOE consistently outperforms OE across different numbers of auxiliary outliers. Moreover, the results indicate that simply increasing the number of exposed outliers does not necessarily lead to the best performance, suggesting that the informativeness of auxiliary outliers is also crucial.
}

\begin{figure}[t]
    \centering
    \begin{minipage}[b]{\linewidth}
        \centering
        \includegraphics[width=0.32\linewidth,height=0.4\linewidth]{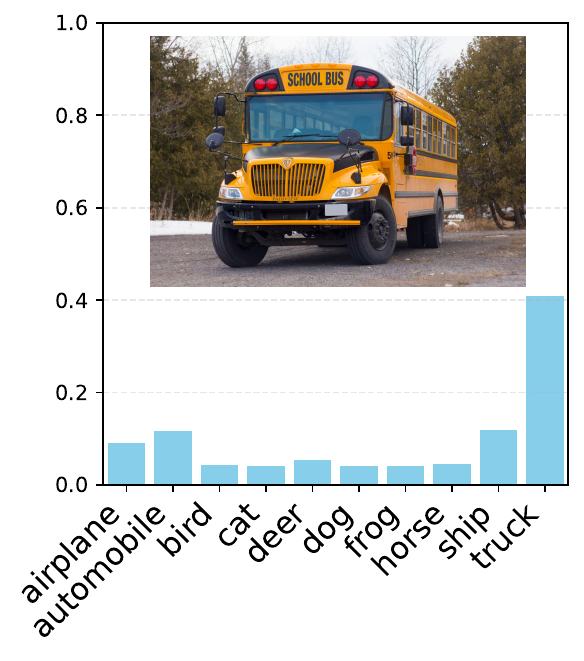}
        \includegraphics[width=0.32\linewidth,height=0.4\linewidth]{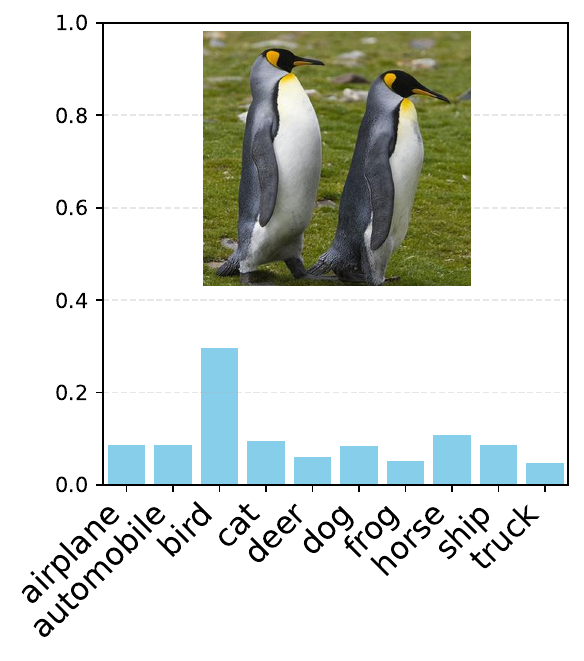}
        \includegraphics[width=0.32\linewidth,height=0.4\linewidth]{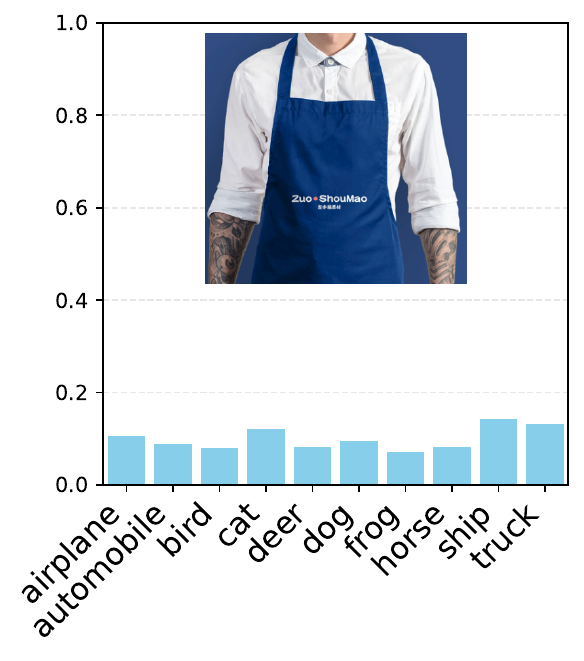}
    \end{minipage} \\
    \caption{
    \revised{Case studies of temperature-scaled outlier targets generated by AOE. Each bar plot shows the soft target distribution over CIFAR-10 ID categories for a representative OOD sample. AOE assigns relatively higher probabilities to semantically related ID categories while maintaining uncertainty for ambiguous OOD samples.}
    }
    \label{fig:pesual_label}
\end{figure}

\noindent\textbf{Impact of Model Architecture.} As shown in \cref{tab:cifar100_backbone}, AOE consistently improves over OE across all backbones, with more pronounced gains on far-OOD. The improvements are especially significant for ResNet-18, while remaining stable for stronger architectures such as WideResNet and DenseNet. These results indicate that AOE is robust to architectural choices and particularly beneficial for weaker models and harder OOD settings.

\noindent\textbf{Visualization.} We further present the histograms of OOD scores for our proposed method and the OE baseline for comparison. \cref{fig:ood_distribution}(a) and \cref{fig:ood_distribution}(b) use CIFAR-10 as the ID dataset and SVHN as the OOD dataset, whereas \cref{fig:ood_distribution}(c) and \cref{fig:ood_distribution}(d) use MNIST as the OOD dataset. OE assigns uniform targets to OOD samples, enforcing equal class uncertainty and resulting in suppressed the model’s ability to distinguish ambiguous samples and results in overconfident responses. \cref{fig:ood_distribution} shows substantial overlap between ID and OOD scores, indicating poor separability. In contrast, AOE replaces uniform supervision with adaptive pseudo-labels modulated by a learnable temperature, which introduces sample-dependent uncertainty into the training signal. This mechanism encourages calibrated confidence for hard OOD samples, leading to a noticeable reduction in high-confidence OOD predictions and a clearer separation between ID and OOD score distributions.

\noindent\textbf{Case Study.} We examine the pseudo-label distributions of several real OOD samples with respect to ID categories to gain deeper insight into the behavior of our method. The results are presented in \cref{fig:pesual_label}. For the OOD sample labeled as “School Bus,” our method assigns relatively high confidence to the semantically related ID class “Truck.” A similar pattern is observed for the sample labeled as “King Penguin.” In contrast, for samples such as “Apron”, the model produces a more uniform distribution over ID categories, reflecting higher uncertainty. We further visualize the corresponding Grad-CAM \cite{gradcam:journals/corr/abs-2402-00909} activation maps in \cref{fig:grad_cam_visual} to examine how different supervision strategies affect the model’s attention. Under OE, the attention is often diffuse, reflecting the absence of meaningful guidance from uniform target supervision. In contrast, AOE produces more focused and semantically coherent attention regions, aligning well with the structure of the learned pseudo-labels. 

\noindent\revised{\textbf{Computational Overhead.}
To further evaluate the practical efficiency of AOE, we analyze the additional overhead introduced by the proposed objective. As defined in \cref{eq:ood_objective}, AOE only introduces a learnable temperature $T$ to recalibrate outlier labels from the original OOD logits. This design keeps the model architecture unchanged and reuses the same backbone computation as standard OE. The extra training cost mainly comes from temperature scaling and distribution alignment on existing OOD logits, which is minor compared with the standard forward and backward propagation. Moreover, since $T$ is used only for training-time label recalibration and is discarded during inference, AOE has the same inference procedure and inference cost as the corresponding OE-based method.}

\begin{figure}[!t]
\centering
\begin{minipage}{0.49\linewidth}
\centering
\includegraphics[width=\linewidth]{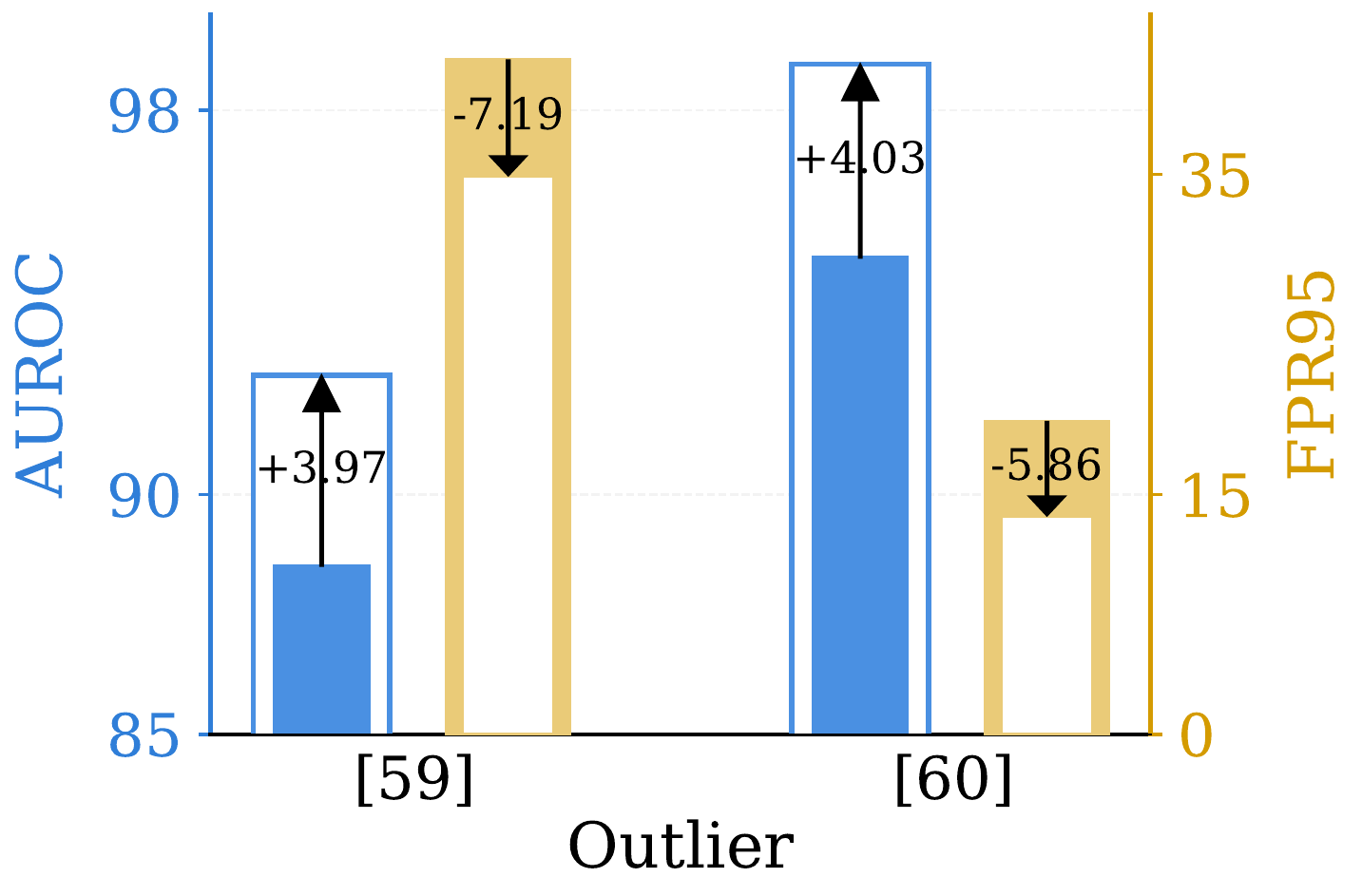}
(a) Auxiliary outlier sources
\end{minipage}
\begin{minipage}{0.49\linewidth}
\centering
\includegraphics[width=\linewidth]{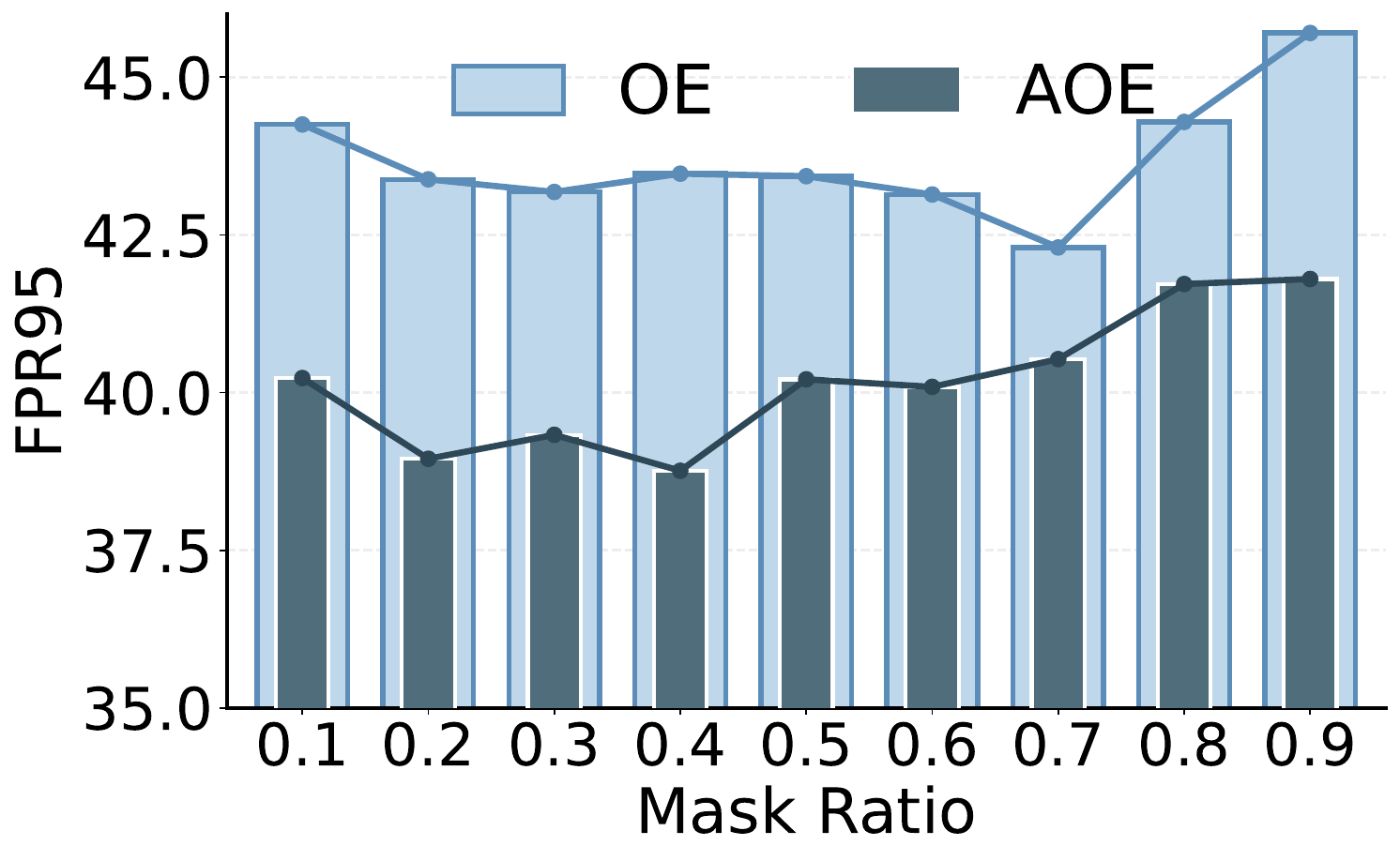}
\vspace{-4pt}
(b) Number of auxiliary outliers
\end{minipage}
\caption{\revised{Further analyses of auxiliary outliers on CIFAR-100. (a) Effect of auxiliary outlier sources. The upward arrows indicate the AUROC increasing from OE to AOE, whereas the downward arrows denote the reduction in FPR95. (b) Effect of the number of auxiliary outliers. AOE consistently outperforms OE in both settings.}}
\label{fig:futher}
\end{figure}

\section{Conclusion}

We identify a fundamental limitation of conventional Outlier Exposure, where uniform supervision on OOD samples induces excessive margin contraction and restricts distributional separation. To address this issue, we propose AOE, which replaces uniform targets with adaptive pseudo-labels derived from temperature-scaled model predictions. Our theoretical analysis establishes a direct connection between the relations of OOD samples with respect to ID categories and OOD separability, and shows that adaptive temperature scaling effectively mitigates the over-softening effect. We further introduce joint and alternating optimization schemes to learn the temperature during training. Extensive experiments demonstrate that AOE consistently improves OOD detection performance while preserving competitive ID Acc, and can be seamlessly integrated with existing OE strategies.

\noindent\revised{\textbf{Limitations and Future Work.}
Despite these improvements, the effectiveness of AOE is closely related to the presence of meaningful predictive relations between auxiliary outliers and ID categories. When auxiliary outliers are substantially distant from the ID label space, the resulting temperature-scaled pseudo-labels may approach uniform targets, thereby reducing the distinguishable advantage over conventional OE. This indicates that AOE is particularly effective when auxiliary outliers contain partial semantic proximity with ID classes. Future work may further strengthen AOE through relation-aware outlier weighting, enabling informative outliers to be more effectively exploited while weakly related samples are adaptively suppressed.
}

\section*{Acknowledgements}
This work was supported by the  NSFC(62276131, 62506168), the Natural Science Foundation of Jiangsu Province of China (BK20240081, BK20251431), the Fundamental Research Funds for the Central Universities (No. 30925010205), the Special Research Project on Teaching Reform of General Artificial Intelligence Courses in Jiangsu Undergraduate Universities (No. ZNT-10), and the Open Project of the Key Laboratory of Modern Agricultural Equipment, Ministry of Agriculture and Rural Affairs, P.R. China.

\section*{Competing interests}
The authors declare that they have no competing interests or financial conflicts to disclose.

\section*{Appendixes}

\subsection*{Theoretical proof}
\label{appendix:proof}

\subsubsection*{Proof of \cref{prop:oversoften}}

We analyze the parameter update under the Outlier Exposure objective using a uniform target $\mathcal{U} = (1/K, \dots, 1/K)$. Let $z = f(x;\theta) \in \mathbb{R}^K$ be the logits for a given sample $x$, and $p = s(z)$ be the predicted softmax probabilities, where $p_k = \frac{e^{z_k}}{\sum_{j=1}^K e^{z_j}}$. The KL divergence loss with respect to the uniform distribution is defined as:
\begin{equation}
    \mathcal{L}_{\mathrm{KL}} = -\frac{1}{K} \sum_{k=1}^K \log p_k + \text{const}.
\end{equation}
The gradient of the loss with respect to the $k$-th logit is:
\begin{equation}
    \frac{\partial \mathcal{L}_{\mathrm{KL}}}{\partial z_k} = p_k - \frac{1}{K}.
\end{equation}
Under a gradient descent update with learning rate $\eta$, the updated logits are $z^+ = z - \eta \nabla_z \mathcal{L}_{\mathrm{KL}}$. For any two categories $i$ and $j$, the difference between their updated logits is:
\begin{align}
    z_i^+ - z_j^+ &= (z_i - z_j) - \eta \left( \frac{\partial \mathcal{L}_{\mathrm{KL}}}{\partial z_i} - \frac{\partial \mathcal{L}_{\mathrm{KL}}}{\partial z_j} \right) \\
    &= (z_i - z_j) - \eta(p_i - p_j).
\end{align}
Assume $z_i > z_j$ without loss of generality. Due to the strict monotonicity of the softmax function, $z_i > z_j \implies p_i > p_j$. Thus, $p_i - p_j > 0$. For a sufficiently small $\eta$ such that $z_i^+ - z_j^+$ maintains the same sign as $z_i - z_j$, we have:
\begin{equation}
    0 < z_i^+ - z_j^+ < z_i - z_j.
\end{equation}
Taking the absolute value, we obtain:
\begin{equation}
    |z_i^+ - z_j^+| = |(z_i - z_j) - \eta(p_i - p_j)| \le |z_i - z_j|.
\end{equation}
This demonstrates that optimizing toward a uniform distribution inherently contracts the relative differences between logits, resulting in the over-softening effect.

\subsubsection*{Proof of \cref{thm:delta_margin_upper}}

Let $z=f(x;\theta)\in\mathbb{R}^K$ denote the logits of an input $x$, and define
\(k^\star(x)=\arg\max_{k\in\{1,\dots,K\}} z_k\) and
\(m(x;\theta)\triangleq z_{k^\star(x)}-\max_{j\neq k^\star(x)} z_j\).
We adopt the widely used maximum softmax probability (MSP) as the OOD score:
\begin{equation}
S(x;\theta)
\triangleq
\max_{k} \mathrm{softmax}(z)_k
=
\frac{\exp(z_{k^\star(x)})}{\sum_{j=1}^K \exp(z_j)}.
\end{equation}

For convenience, define the ID and OOD random variables
\(m_i \triangleq m(x_i;\theta), x_i\sim\mathcal{D}_{\mathrm{ID}}\) and
\(m_o \triangleq m(x_o;\theta), x_o\sim\mathcal{D}_{\mathrm{OOD}}\),
with corresponding statistics
\(\mu_i \triangleq \mathbb{E}[m_i]\),
\(\mu_o \triangleq \mathbb{E}[m_o]\), and
\(\nu_o^2 \triangleq \mathrm{Var}(m_o)\).

The distribution separation margin is defined as
\begin{equation}
\Delta(\theta)
=
\mathbb{E}_{x_i\sim\mathcal{D}_{\mathrm{ID}}}[S(x_i;\theta)]
-
\mathbb{E}_{x_o\sim\mathcal{D}_{\mathrm{OOD}}}[S(x_o;\theta)] .
\end{equation}

We first establish a tight connection between the MSP score and the margin $m(x;\theta)$.

\begin{lemma}[Margin bounds for MSP]
\label{lem:msp_bounds}
For any $x$ with margin $m(x;\theta)$, the MSP score satisfies
\begin{equation}
\frac{1}{1+(K-1)\exp(-m(x;\theta))}
\;\le\;
S(x;\theta)
\;\le\;
\frac{1}{1+\exp(-m(x;\theta))}.
\end{equation}
\end{lemma}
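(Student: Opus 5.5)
The plan is to write the MSP score in terms of logit gaps relative to the top logit. Fix $x$, let $a=k^\star(x)$ and $b=\arg\max_{j\neq a} z_j$, so that $m=m(x;\theta)=z_a-z_b\ge 0$. Dividing numerator and denominator by $\exp(z_a)$ gives
\begin{equation*}
S(x;\theta)=\frac{1}{1+\sum_{j\neq a}\exp(z_j-z_a)} .
\end{equation*}
The map $t\mapsto 1/(1+t)$ is decreasing on $[0,\infty)$. Both inequalities will therefore follow from two-sided bounds on $\Sigma\triangleq\sum_{j\neq a}\exp(z_j-z_a)$, and these bounds will be expressed through $\exp(-m)$.

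For the upper bound on $S$, I need a lower bound on $\Sigma$. All summands are nonnegative, and the sum contains the term $j=b$, which equals $\exp(z_b-z_a)=\exp(-m)$. Hence $\Sigma\ge\exp(-m)$, and so $S\le 1/(1+\exp(-m))$.

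For the lower bound on $S$, I need an upper bound on $\Sigma$. By the definition of $b$, every $j\neq a$ satisfies $z_j\le z_b$. So each of the $K-1$ summands is at most $\exp(z_b-z_a)=\exp(-m)$, which gives $\Sigma\le (K-1)\exp(-m)$ and therefore $S\ge 1/(1+(K-1)\exp(-m))$.

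The only subtle point is ties in the argmax. If several logits share the maximum, then $m=0$ and the chosen $a$ and $b$ are arbitrary among the maximizers. Both bounds still hold, because the argument uses only $z_j\le z_b\le z_a$ for all $j\neq a$ and the presence of the $j=b$ term. Beyond that there is no real obstacle; this is a direct monotonicity argument.
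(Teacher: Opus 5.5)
Your proposal is correct and matches the paper's proof: both rewrite $S(x;\theta)=1/\bigl(1+\sum_{j\neq k^\star}\exp(-\delta_j)\bigr)$, bound each summand by $\exp(-m)$ to obtain the lower bound, and use the runner-up term, which equals $\exp(-m)$, to obtain the upper bound. Your remark on ties in the argmax is a small point of care that the paper leaves implicit; it does not change the argument.
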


\begin{proof}
Let $\delta_j = z_{k^\star}-z_j$ for $j\neq k^\star$, so that
\begin{equation}
S(x;\theta)
=
\frac{1}{1+\sum_{j\neq k^\star} \exp(-\delta_j)}.
\end{equation}
Since \(m(x;\theta)=\min_{j\neq k^\star}\delta_j\), we have
\(\exp(-\delta_j)\le \exp(-m)\) for all \(j\neq k^\star\), yielding
\begin{equation}
\sum_{j\neq k^\star} \exp(-\delta_j)
\le (K-1)\exp(-m),
\end{equation}
which proves the lower bound. Conversely, since there exists at least one
\(j_0\neq k^\star\) such that \(\delta_{j_0}=m\), we obtain
\begin{equation}
\sum_{j\neq k^\star} \exp(-\delta_j)
\ge \exp(-m),
\end{equation}
which implies the upper bound.
\end{proof}

For notational simplicity, define
\(\sigma(t)\triangleq 1/(1+\exp(-t))\) and
\(g(t)\triangleq 1/(1+(K-1)\exp(-t))\).

By \cref{lem:msp_bounds},
\begin{equation}
\mathbb{E}_{x_i\sim\mathcal{D}_{\mathrm{ID}}}[S(x_i;\theta)]
\le
\mathbb{E}[\sigma(m_i)].
\end{equation}
The sigmoid function $\sigma(t)$ satisfies
\begin{equation}
\sigma''(t)=\sigma(t)\bigl(1-\sigma(t)\bigr)\bigl(1-2\sigma(t)\bigr)\le 0,
\quad \forall\, t\ge 0,
\end{equation}
and is therefore concave on $[0,\infty)$. In practice, ID samples are typically
well classified with positive margins \(m_i\ge 0\) with high probability.
Under this mild assumption, Jensen's inequality yields
\begin{equation}
\mathbb{E}[\sigma(m_i)]
\le
\sigma(\mathbb{E}[m_i])
=
\frac{1}{1+\exp(-\mu_i)} .
\end{equation}
Any deviation caused by rare negative-margin ID samples is absorbed into
a higher-order remainder term. Again by \cref{lem:msp_bounds},
\begin{equation}
\mathbb{E}_{x_o\sim\mathcal{D}_{\mathrm{OOD}}}[S(x_o;\theta)]
\ge
\mathbb{E}[g(m_o)].
\end{equation}

We perform a second-order Taylor expansion of $g(m_o)$ around $\mu_o$.
Let \(\varepsilon = m_o-\mu_o\), then \(\mathbb{E}[\varepsilon]=0\) and
\(\mathbb{E}[\varepsilon^2]=\nu_o^2\). By Taylor's theorem,
\begin{equation}
\begin{aligned}
g(\mu_o+\varepsilon)
=
&\; g(\mu_o)
+
g'(\mu_o)\varepsilon
+
\frac{1}{2}g''(\mu_o)\varepsilon^2+
\frac{1}{6}g^{(3)}(\xi)\varepsilon^3,
\end{aligned}
\end{equation}
where $\xi$ lies between $\mu_o$ and $\mu_o+\varepsilon$.
Taking expectation gives
\begin{equation}
\mathbb{E}[g(m_o)]
=
g(\mu_o)
+
\frac{1}{2}g''(\mu_o)\nu_o^2
+
\mathcal{R}_3,
\end{equation}
where \(\mathcal{R}_3=\frac{1}{6}\mathbb{E}[g^{(3)}(\xi)\varepsilon^3]\)
collects third- and higher-order terms.

A direct calculation yields
\begin{equation}
g'(t)=\frac{(K-1)\exp(-t)}{(1+(K-1)\exp(-t))^2},
\end{equation}
and
\begin{equation}
g''(t)=\frac{(K-1)(K-1-\exp(t))\exp(-t)}
{(1+(K-1)\exp(-t))^3}.
\end{equation}
In the over-softening regime induced by uniform OE supervision,
\(\mu_o\le \log(K-1)\) typically holds, implying \(g''(\mu_o)\ge 0\).
Using the bound \(g''(\mu_o)\le g'(\mu_o)\), we obtain
\begin{equation}
\begin{aligned}
\mathbb{E}[g(m_o)]
\ge\;&
\frac{1}{1+(K-1)\exp(-\mu_o)} \\
&+
\frac{(K-1)\exp(-\mu_o)}
{2(1+(K-1)\exp(-\mu_o))^2}\,\nu_o^2
+
\widetilde{\mathcal{R}}.
\end{aligned}
\end{equation}
where $\widetilde{\mathcal{R}}$ absorbs the discrepancy between $g''$ and $g'$
as well as the third-order remainder.

Combining the ID upper bound and the OOD lower bound, we obtain
\begin{equation}
\begin{aligned}
\Delta(\theta)
\le\;&
\frac{1}{1+\exp(-\mu_i)}
-
\frac{1}{1+(K-1)\exp(-\mu_o)}
\\
&\;
-
\frac{(K-1)\exp(-\mu_o)}
{2(1+(K-1)\exp(-\mu_o))^2}\,\nu_o^2
+
\mathcal{R},
\end{aligned}
\end{equation}
where $\mathcal{R}$ collects all higher-order and approximation terms.
This completes the proof.

\subsubsection*{Proof of \cref{thm:T_mitigate}}

For an outlier sample $x_o$, we minimize $\mathcal{L} = \text{KL}(q_T \| s(z))$, where $q_T = s(z/T)$ is the temperature-smoothed target. The gradient with respect to $z_k$ is $\frac{\partial \mathcal{L}}{\partial z_k} = p_k - q_{T,k}$.
Let $a = \arg\max_k z_k$ and $b = \arg\max_{j \neq a} z_j$. The update for the logit margin $m(z) = z_a - z_b$ is given by:
\begin{align}
    m(z^+) &= (z_a - \eta(p_a - q_{T,a})) - (z_b - \eta(p_b - q_{T,b})) \\
    &= m(z) - \eta[(p_a - p_b) - (q_{T,a} - q_{T,b})].
\end{align}
The term $p_a - p_b$ is expanded as:
\begin{equation}
\begin{aligned}
p_a - p_b 
&= \frac{e^{z_a} - e^{z_b}}{\sum_{k=1}^K e^{z_k}} \\
&= \frac{e^{z_a - z_b} - 1}{e^{z_a - z_b} + 1 + \sum_{k \neq a,b} e^{z_k - z_b}} \\
&= \frac{e^{m(z)} - 1}{e^{m(z)} + 1 + R(z,b)}.
\end{aligned}
\end{equation}
where $R(z,b) = \sum_{k \neq a,b} \exp(z_k - z_b)$. Similarly, for the target distribution with temperature $T$:
\begin{equation}
    q_{T,a} - q_{T,b} = \frac{e^{z_a/T} - e^{z_b/T}}{\sum_{k=1}^K e^{z_k/T}} = \frac{e^{m(z)/T} - 1}{e^{m(z)/T} + 1 + R(z/T,b)}.
\end{equation}
Substituting these terms back into the update equation yields:
\begin{equation}
    m(z^{+}) = m(z) -\eta\left[ \frac{e^{m(z)}-1}{e^{m(z)}+1+R(z,b)} - \frac{e^{m(z)/T}-1}{e^{m(z)/T}+1+R(z/T,b)} \right].
\end{equation}
This completes the proof.

\subsubsection*{Proof of \cref{cor:finite_T}}

Define the function $h(x, R) = \frac{e^x - 1}{e^x + 1 + R}$. The margin contraction is given by:
\begin{equation}
    \Delta m_T = \eta \left[ h(m(z), R(z,b)) - h\left(\frac{m(z)}{T}, R\left(\frac{z}{T},b\right)\right) \right].
\end{equation}
In standard uniform OE ($T \to \infty$), $m(z)/T \to 0$ and $z_k/T \to 0$, leading to $h(0, K-2) = 0$. Thus, the maximum contraction is $\Delta m_\infty = \eta h(m(z), R(z,b))$.
To analyze the monotonicity, we compute the partial derivative of $h(x,R)$ with respect to $x$:
\begin{equation}
    \frac{\partial h}{\partial x} = \frac{e^x(e^x + 1 + R) - e^x(e^x - 1)}{(e^x + 1 + R)^2} = \frac{e^x(R + 2)}{(e^x + 1 + R)^2} > 0.
\end{equation}
Since $h$ is strictly increasing in $x$, and for any $T > 1$, $0 < \frac{m(z)}{T} < m(z)$, we have:
\begin{equation}
    0 < h\left(\frac{m(z)}{T}, R\left(\frac{z}{T},b\right)\right) < h(m(z), R(z,b)).
\end{equation}
Consequently, $\Delta m_T = \Delta m_\infty - \eta h(m(z)/T, R(z/T,b)) < \Delta m_\infty$. This proves that a finite temperature $T^\star \in (1, \infty)$ strictly reduces the magnitude of margin contraction compared to uniform OE.

\subsection*{Generalization Bound}
\label{appendix:generalization_bound}

\subsubsection*{Preliminary}

For clarity and ease of reference, we summarize the theorems and corollaries presented in this paper.

\begin{theorem}
\label{thm:kl_upper_bound}
Define the true relationship between OOD sample $\x$ and ID categories as \(p^\star(\x)\), where $p^\star\sim\TM$. And define the assigned pseudo-labels as \(q(\x) = \text{softmax}(f(\x)/T)\) for some temperature \(T > 1\), $q(\x)\sim\QM$. Then, the expected diversity between the pseudo-labels and the true relationship satisfies the following upper bound:
\begin{equation}
    \mathbb{E}\left[ \mathrm{KL}\left(\TM \,\|\, \QM\right) \right]
    \leq
    \log K - \mathbb{E}\left[ H(p^\star(\x)) \right] - \frac{C_1'}{T} + \frac{C_2'}{T^2} + O( \frac{1}{T^3} ),
\end{equation}
where \(C_1' = \mathbb{E}_\x\left[ \sum_{i=1}^K p_i(\x) f_i(\x) - \mu(\x) \right]\), \(C_2' = \frac{1}{2} \mathbb{E}_\x\left[ \text{Var}(f(\x)) \right]\), with \(\mu(\x) = \frac{1}{K} \sum_{i=1}^K f_i(\x)\) and \(\text{Var}(f(\x)) = \frac{1}{K} \sum_{i=1}^K f_i(\x)^2 - \mu(\x)^2\). And $p(\x)=\text{softmax}(f(\x))$.
\end{theorem}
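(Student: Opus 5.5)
Interpret $\mathbb{E}[\mathrm{KL}(\TM\|\QM)]$ pointwise as $\mathbb{E}_\x[\mathrm{KL}(p^\star(\x)\,\|\,q(\x))]$, which is the only reading under which the right-hand side, a sum of per-sample expectations, makes sense. The plan is to split the divergence into an entropy part and a cross-entropy part, bound the cross-entropy by a quantity that depends only on $q$, and then expand that quantity in $1/T$.

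\textbf{Step 1 (entropy/cross-entropy split).} Write
\begin{equation}
\mathrm{KL}(p^\star\|q)=-H(p^\star)-\sum_i p^\star_i\log q_i .
\end{equation}
The first term already supplies $-\mathbb{E}[H(p^\star(\x))]$.

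\textbf{Step 2 (log-partition form of the cross-entropy).} With $q_i=\exp(f_i/T)/Z_T$ and $Z_T=\sum_j e^{f_j/T}$,
\begin{equation}
-\sum_i p^\star_i\log q_i=\log Z_T-\frac{1}{T}\sum_i p^\star_i f_i .
\end{equation}

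\textbf{Step 3 (replace $p^\star$ by the model prediction $p$).} The target form has $C_1'$ weighted by $p=\mathrm{softmax}(f)$, not by the unknown $p^\star$, so an assumption linking them is needed. I would take $\mathbb{E}_\x[\sum_i p^\star_i f_i]\ge \mathbb{E}_\x[\sum_i p_i f_i]$: the true relation puts at least as much mass on high-logit classes as the model does (for instance, it is at least as concentrated on the top class), or in the simplest case $p^\star=p$ in expectation. This is the main obstacle, since the inequality cannot hold for arbitrary $p^\star$. I would state it explicitly as a modeling assumption in the same spirit as the ``typically holds'' assumptions in the proof of \cref{thm:delta_margin_upper}. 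With it, the cross-entropy is at most $\log Z_T-\frac1T\sum_i p_i f_i$ in expectation.

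\textbf{Step 4 (expansion of $\log Z_T$ in $1/T$).} Factor
\begin{equation}
\log Z_T=\log K+\log\Big(\tfrac1K\sum_j e^{f_j/T}\Big),
\end{equation}
and expand the exponential and then the logarithm:
\begin{equation}
\log Z_T=\log K+\frac{\mu}{T}+\frac{\mathrm{Var}(f)}{2T^2}+O(T^{-3}).
\end{equation}
This is the cumulant expansion of the uniform average of $f_j$. Combining with Step 3 gives
\begin{equation}
\log K-\frac{1}{T}\Big(\sum_i p_i f_i-\mu\Big)+\frac{\mathrm{Var}(f)}{2T^2}+O(T^{-3}).
\end{equation}
Taking $\mathbb{E}_\x$ yields exactly $-C_1'/T+C_2'/T^2$.

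\textbf{Step 5 (uniform remainder).} The $O(T^{-3})$ term must remain valid after taking expectations. I would assume bounded logits $\|f(\x)\|_\infty\le B$, or at least enough moment bounds. Then the Taylor remainders of $\exp$ and $\log$ are uniformly $O(B^3/T^3)$, and dominated convergence lets the expectation pass through. Assembling Steps 1 through 5 gives the claimed bound.

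Note that $C_1'\ge0$, because softmax weighting favors larger logits, so the $1/T$ term genuinely reduces the bound.
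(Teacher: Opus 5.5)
Your proof is correct under the assumption you state, and it follows the paper's skeleton: the same split $\mathrm{KL}(p^\star\|q)=-H(p^\star)-\frac{1}{T}\sum_i p^\star_i f_i+\log Z_T$, and the same second-order cumulant expansion of $\log Z_T$. The real difference is the step linking $p^\star$ to $p$.

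The paper does not posit an inequality. It models $p^\star(\x)=\mathrm{softmax}(f(\x)+\epsilon)$ with $\epsilon\sim\mathcal{N}(0,\sigma^2 I)$, as a proxy for the Bayesian predictive distribution. A Taylor expansion then gives $\mathbb{E}[\sum_i p^\star_i f_i]=\sum_i p_i f_i+O(\sigma^2)$, and the paper drops the $O(\sigma^2)$ term. So what it actually derives is an approximate equality that it labels ``$\le$''.

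Your route gives three things the paper lacks:
\begin{itemize}
\item a genuine inequality;
\item a uniform control of the $O(T^{-3})$ remainder under the expectation, which the paper never addresses;
\item a proof of $C_1'\ge 0$ via comonotonicity, which the paper only asserts.
\end{itemize}
The paper's route gives a concrete generative story for $p^\star$ instead.

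Be aware that the two modeling choices agree only in the equality case $\mathbb{E}[p^\star]=p$. Carrying the paper's expansion to second order gives $\mathbb{E}_\epsilon[p^\star_i]=p_i+\sigma^2 p_i(\|p\|^2-p_i)+O(\sigma^4)$. Hence
$\mathbb{E}_\epsilon[\sum_i p^\star_i f_i]-\sum_i p_i f_i=-\sigma^2\bigl(\sum_i p_i^2 f_i-\|p\|^2\sum_i p_i f_i\bigr)+O(\sigma^4)\le 0$,
because $p_i$ and $f_i$ are comonotone. In words, logit noise flattens the averaged softmax. Two consequences follow:
\begin{itemize}
\item Under the paper's own model, your assumption fails at order $\sigma^2$, so do not motivate it by appeal to noise-smoothed predictions.
\item The term the paper discards enters the KL with a nonnegative sign, $+\frac{\sigma^2}{T}\,\mathbb{E}_\x\bigl[\sum_i p_i^2 f_i-\|p\|^2\sum_i p_i f_i\bigr]$. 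So its ``$\le$'' is not justified by its own model either.
\end{itemize}

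Finally, your parenthetical justification is too weak. Putting more mass on the top class does not imply $\sum_i p^\star_i f_i\ge\sum_i p_i f_i$, because mass can move from the runner-up class to a class with a very negative logit. A correct sufficient condition is that $p^\star$ first-order stochastically dominates $p$ along the ordering of the logits: $p^\star(\{i:f_i\ge t\})\ge p(\{i:f_i\ge t\})$ for every $t$.
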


\begin{corollary}[Optimal Temperature]
\label{cor:optimal_temperature}
Under the conditions of \cref{thm:kl_upper_bound}, the expected KL divergence achieves its minimum at a finite temperature \(T^\star = \frac{2C_2'}{C_1'}\), where \(C_1'\) and \(C_2'\) are defined as in \cref{thm:kl_upper_bound}.
\end{corollary}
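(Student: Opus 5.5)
The plan is to treat the right-hand side of \cref{thm:kl_upper_bound}, truncated at second order, as a function of $T$ and minimize it by elementary calculus. Write
\[
B(T) \;=\; \log K - \mathbb{E}\left[ H(p^\star(\x)) \right] - \frac{C_1'}{T} + \frac{C_2'}{T^2}.
\]
The first two terms do not depend on $T$. Substituting $u = 1/T$, with $u \in (0,1)$ since $T>1$, reduces the $T$-dependent part to the quadratic $\phi(u) = C_2' u^2 - C_1' u$. The whole argument is then the minimization of a one-variable quadratic, together with sign checks on $C_1'$ and $C_2'$.

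The key steps, in order:
\begin{enumerate}
\item Show $C_2' \ge 0$. This is immediate, since $C_2'$ is half an expected variance.
\item Show $C_1' \ge 0$, with strict inequality unless every logit vector is constant. Note that
\[
\sum_i p_i(\x) f_i(\x) - \mu(\x) \;=\; \sum_i \Bigl(p_i(\x) - \tfrac{1}{K}\Bigr) f_i(\x).
\]
Because $p_i(\x)$ is an increasing function of $f_i(\x)$, the weights $p_i - 1/K$ and the values $f_i$ are similarly ordered. A Chebyshev sum inequality (equivalently, the covariance of $f_i$ and the increasing function $e^{f_i}$ under the uniform measure is nonnegative) therefore gives nonnegativity pointwise in $\x$. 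Taking expectation preserves it.
\item With $C_1'>0$ and $C_2'>0$, $\phi$ is strictly convex. Its unique stationary point is $u^\star = C_1'/(2C_2')$, hence $T^\star = 2C_2'/C_1'$. As a check in the $T$ variable, $B'(T) = C_1'/T^2 - 2C_2'/T^3$ vanishes exactly there, and $B''(T^\star) = C_1'/(T^\star)^3 > 0$.
\item Finiteness follows because $C_1'>0$ makes $u^\star>0$. The contrast with $T\to\infty$, i.e.\ $u\to 0$, is that $\phi(u^\star) = -C_1'^2/(4C_2') < 0 = \phi(0)$. So uniform labels are strictly worse than $T^\star$ for the bound, which links back to \cref{cor:finite_T}.
\end{enumerate}

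The main obstacle is rigor rather than computation, and it has two parts.

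First, the $O(1/T^3)$ remainder is dropped. The claim is really about the second-order truncation of the bound, which is only accurate when $T^\star$ is large, i.e.\ when $C_2' \gg C_1'$. I would state the corollary as the minimizer of this truncated bound. I would then argue that the remainder shifts the minimizer by $O(1/T^{\star 2})$ in $u$, since $\phi''(u) = 2C_2'$ is bounded away from zero.

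Second, the condition $T^\star>1$ requires $2C_2' > C_1'$. If this fails, the constrained minimum over $T>1$ sits at the boundary $T=1$, which is still finite, so the qualitative conclusion survives. I would add the condition $2C_2' > C_1'$ explicitly as a hypothesis for the interior formula.
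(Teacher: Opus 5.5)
Your proposal is correct and is essentially the paper's proof, which keeps only the leading terms $-C_1'/T + C_2'/T^2$ of the bound and sets their derivative to zero. Your Chebyshev argument for $C_1' \ge 0$, the second-derivative check, and the explicit condition $2C_2' > C_1'$ for $T^\star > 1$ fill in points the paper either asserts without proof or omits. One correction to your caveat: the accuracy of the truncation is governed by $\max_i |f_i(x) - \mu(x)|/T$, not by $T^\star$ being large. Since $C_1' \approx 2C_2'$ when the logit spread is small, the regime where the truncated bound is trustworthy forces $T^\star$ close to (possibly below) $1$, while a large $T^\star$ only arises from widely spread logits, for which the dropped $O(1/T^3)$ terms need not be small at $T^\star$.
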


\begin{corollary}[Optimal Temperature is Strictly Better than Infinite Temperature]
\label{cor:finite_vs_infinite}
Under the conditions of \cref{thm:kl_upper_bound}, the KL divergence evaluated at the optimal finite temperature \(T^\star = \frac{2C_2'}{C_1'}\) is strictly smaller than the KL divergence as \(T\to\infty\). Formally, we have
\begin{equation}
\mathbb{E}\left[ \mathrm{KL}\left(\TM \,\|\, \QM\right) \right]\Big|_{T=T^\star} 
<
\lim_{T\to\infty} \mathbb{E}\left[ \mathrm{KL}\left(\TM \,\|\, \QM\right) \right].
\end{equation}
\end{corollary}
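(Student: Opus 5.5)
The plan is to work with the right-hand side of \cref{thm:kl_upper_bound} as a function of $T$,
\begin{equation*}
B(T) \;=\; \log K - \mathbb{E}[H(p^\star(\x))] - \frac{C_1'}{T} + \frac{C_2'}{T^2} + O\!\left(\frac{1}{T^3}\right),
\end{equation*}
and to compare its value at $T^\star = 2C_2'/C_1'$ from \cref{cor:optimal_temperature} with its limit as $T\to\infty$. The comparison needs the bound to be tight, so the first step is to check that $B(T)$ is actually an asymptotic expansion of $\mathbb{E}[\mathrm{KL}(\TM\,\|\,\QM)]$ in powers of $1/T$, and not merely an upper bound. I expect this from the proof of \cref{thm:kl_upper_bound}. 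That proof should expand $\log\sum_i e^{f_i/T} = \log K + \mu/T + \tfrac{1}{2}\mathrm{Var}(f)/T^2 + O(T^{-3})$ and then substitute into $\mathrm{KL}(p^\star\|q) = -H(p^\star) - \sum_i p^\star_i \log q_i$, replacing $p^\star$ by $p$ in the linear term.

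Given this, the limit is direct. As $T\to\infty$, $q(\x)\to\mathcal{U}$ pointwise, so by dominated convergence ($\mathrm{KL}$ to a distribution with full support is bounded by $\log K$ plus a bounded term)
\begin{equation*}
\lim_{T\to\infty}\mathbb{E}[\mathrm{KL}(\TM\,\|\,\QM)] = \mathbb{E}[\mathrm{KL}(p^\star\|\mathcal{U})] = \log K - \mathbb{E}[H(p^\star(\x))].
\end{equation*}
This agrees with $B(\infty)$. At $T=T^\star$ the two explicit correction terms combine to $-C_1'^2/(4C_2')$, so
\begin{equation*}
\mathbb{E}[\mathrm{KL}]\big|_{T^\star} = \lim_{T\to\infty}\mathbb{E}[\mathrm{KL}] - \frac{C_1'^2}{4C_2'} + O\!\left(\frac{1}{T^{\star 3}}\right).
\end{equation*}

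The strict inequality therefore reduces to two facts about the constants.
\begin{itemize}
\item \emph{$C_2'>0$.} This holds whenever the logits are not almost surely constant across classes, since then $\mathrm{Var}(f(\x))>0$ on a set of positive measure.
\item \emph{$C_1'>0$.} Pointwise, $\sum_i p_i f_i - \mu$ is the mean of $f$ under $\mathrm{softmax}(f)$ minus its mean under $\mathcal{U}$. It equals $\mathrm{Cov}_{\mathcal{U}}(f, e^{f})/\mathbb{E}_{\mathcal{U}}[e^{f}]$. By Chebyshev's sum inequality, since $f$ and $e^f$ are comonotone, this is $\ge 0$, with equality iff $f(\x)$ is constant. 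Taking expectations gives $C_1'>0$ under the same non-degeneracy assumption.
\end{itemize}
Hence $T^\star\in(0,\infty)$ and the gain $C_1'^2/(4C_2')$ is strictly positive.

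The main obstacle is the remainder $O(T^{\star-3})$. The inequality is strict only if this term is dominated by $C_1'^2/(4C_2')$, and $T^\star>1$ is needed to match the hypothesis $T>1$. I would first try to make the $O(\cdot)$ explicit. The third cumulant of $f/T$ under $\mathcal{U}$ is bounded by $\|f\|_\infty^3/T^3$, which bounds the remainder by $c\,\mathbb{E}\|f\|_\infty^3/T^{\star 3}$. I would then state the corollary in the regime where this is smaller than $C_1'^2/(4C_2')$, i.e.\ where $T^\star$ is large relative to the logit scale. If this becomes too messy, a fallback avoids evaluating at $T^\star$ altogether. Since $\mathbb{E}[\mathrm{KL}]$ approaches its limit as $\log K - \mathbb{E}H - C_1'/T + O(T^{-2})$, it lies strictly below the limit for all sufficiently large finite $T$. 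Its infimum over finite $T$, attained at the optimal temperature, is then strictly smaller than the $T\to\infty$ value.
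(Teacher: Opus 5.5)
Your central computation is the paper's entire proof. The paper substitutes $T^\star=2C_2'/C_1'$ into $-C_1'/T+C_2'/T^2$, obtains $-(C_1')^2/(4C_2')<0$, and concludes. It drops the $O(T^{-3})$ term without comment and relies on the unproved assertion $C_1',C_2'>0$ at the end of the proof of \cref{thm:kl_upper_bound}. Your dominated-convergence identification of the limit as $\log K-\mathbb{E}[H(p^\star)]$ is correct, as is your covariance/Chebyshev proof that $C_1'>0$ unless the logits are a.s.\ constant; these supply exactly what the paper omits. The obstacle you flag is real, not cosmetic: $T^\star>1$ does not follow from the definition, and without it the claim is false. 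Take $K=1000$, a single input with logits $f=(\ln 999,0,\dots,0)$, and $\sigma\to 0$. Then $p=(\tfrac12,\tfrac{1}{1998},\dots,\tfrac{1}{1998})$ and $T^\star\approx 0.014$, so $\mathrm{KL}(p\,\|\,\mathrm{softmax}(f/T^\star))\approx 245.6$, whereas the $T\to\infty$ value is $\log 1000-H(p)\approx 2.76$.

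Neither of your endings proves the statement as written.
\begin{itemize}
\item The regime route adds a hypothesis: the logit scale must be small relative to $T^\star$, which excludes peaked predictions like the example above. It must also absorb an $O(\sigma^2)/T$ error you did not budget for. $C_1'$ is built from $p$, while the true $1/T$ coefficient of $\mathbb{E}[\mathrm{KL}]$ is $\mathbb{E}[\sum_i p^\star_i f_i]-\mathbb{E}[\mu]$, so $B(T)$ is not literally the asymptotic expansion you plan to verify.
\item The fallback proves the inequality at the true minimizer of $\mathbb{E}[\mathrm{KL}]$. That minimizer is not $2C_2'/C_1'$ in general; as $\sigma\to0$ it tends to $T=1$.
\end{itemize}

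You can close the argument without any remainder bookkeeping by using convexity in $s=1/T$:
\begin{itemize}
\item The function $F(s)=\mathbb{E}\bigl[-H(p^\star)-s\sum_i p^\star_i f_i+\log\sum_j e^{sf_j}\bigr]=\mathbb{E}[\mathrm{KL}(p^\star\,\|\,\mathrm{softmax}(sf))]$ is convex, and $F(0)$ is the $T\to\infty$ limit.
\item As $\sigma\to 0$, $F(1)=\mathbb{E}[\mathrm{KL}(p^\star\,\|\,p)]\to 0$, while $F(0)\to\mathbb{E}[\mathrm{KL}(p\,\|\,\mathcal{U})]>0$ under your non-degeneracy assumption. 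So for small noise $F(1)<F(0)$.
\item Hence for every $s\in(0,1]$, $F(s)\le(1-s)F(0)+sF(1)<F(0)$.
\end{itemize}
So with $T^\star\ge 1$ stated explicitly (the theorem only covers $T>1$ anyway), the corollary holds rigorously. In fact it holds for every finite $T\ge 1$, not just at $T^\star$.
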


\subsubsection*{Proof of \cref{thm:kl_upper_bound}}

Let \(f(x) \in \mathbb{R}^K\) denote the logit output of a model over \(K\) classes. To model predictive uncertainty, particularly \emph{epistemic uncertainty}, we consider perturbations in the logit space, following prior works on approximate Bayesian inference via stochastic ensembles and MC Dropout~\cite{prrof:conf/nips/Lakshminarayanan17}. Specifically, we define the predictive distribution under logit noise as
\(\tilde{p}(x) := \mathbb{E}_{\epsilon \sim \mathcal{N}(0, \sigma^2 I)} [ \mathrm{softmax}(f(x) + \epsilon) ]\),
where \(\epsilon \sim \mathcal{N}(0, \sigma^2 I)\) represents Gaussian perturbations that approximate the variability induced by the posterior over model parameters. This formulation approximates the Bayesian predictive distribution \(p(y \mid x, \mathcal{D})\), marginalized over the posterior \(p(\theta \mid \mathcal{D})\), under the assumption that variations in \(f_\theta(x)\) can be locally modeled as additive Gaussian noise in the logit space. In practice, we approximate \(p^\star(x)\) with a single Monte Carlo sample:
\begin{equation}
p^\star(x) := \mathrm{softmax}(f(x) + \epsilon),
\quad
\epsilon \sim \mathcal{N}(0, \sigma^2 I),
\end{equation}
where \(p^\star(x)\) serves as a stochastic proxy of the true predictive distribution. Additionally, we define a temperature-scaled softmax distribution:
\begin{equation}
q(x) := \mathrm{softmax}\left( \frac{f(x)}{T} \right),
\quad
T > 1,
\end{equation}
where the temperature parameter \(T\) controls the entropy of the distribution. A higher temperature yields a softer output with higher uncertainty, which is particularly useful for pseudo-labeling and calibration.

We analyze the Kullback--Leibler divergence~\cite{kl_diver:journals/tcyb/JiZYWZG22}:
\begin{equation}
\mathrm{Gap}(T) := \mathrm{KL}(p^\star(x) \| q(x)).
\end{equation}

By definition of KL divergence:
\begin{align}
\mathrm{Gap}(T)
&= \sum_{i=1}^K p^\star_i(x) \log \left( \frac{p^\star_i(x)}{q_i(x)} \right) \\
&= -H(p^\star(x)) - \sum_{i=1}^K p^\star_i(x) \log q_i(x),
\end{align}
where \(H(p^\star(x)) = -\sum_{i=1}^K p^\star_i(x) \log p^\star_i(x)\) denotes the entropy.

Since
\begin{equation}
\log q_i(x)
=
\frac{f_i(x)}{T}
-
\log \left( \sum_{j=1}^K e^{f_j(x)/T} \right),
\end{equation}
we obtain:
\begin{equation}
\begin{aligned}
\sum_{i=1}^K p^\star_i(x) \log q_i(x)
=&\;
\frac{1}{T} \sum_{i=1}^K p^\star_i(x) f_i(x) \\
&-
\log \left( \sum_{j=1}^K e^{f_j(x)/T} \right).
\end{aligned}
\end{equation}

Substituting back, we get:
\begin{equation}
\begin{aligned}
\mathrm{Gap}(T)
=&\;
-H(p^\star(x))
-
\frac{1}{T} \sum_{i=1}^K p^\star_i(x) f_i(x)\\
&+
\log \left( \sum_{j=1}^K e^{f_j(x)/T} \right).
\end{aligned}
\label{eq:kl_expand}
\end{equation}

Assume that the noise magnitude \(\|\epsilon\|\) is sufficiently small. Expanding \(p^\star_i(x)\) via first-order Taylor approximation:
\begin{equation}
p^\star_i(x)
=
p_i(x)
+
\sum_{j=1}^K
\frac{\partial p_i(x)}{\partial f_j(x)}
\epsilon_j
+
o(\|\epsilon\|),
\end{equation}
where \(\frac{\partial p_i}{\partial f_j} = p_i(x)(\delta_{ij} - p_j(x))\).

Taking expectation with respect to \(\epsilon\) and using \(\mathbb{E}[\epsilon_j] = 0\), we obtain:
\begin{align}
\mathbb{E}[p^\star_i(x)]
&= p_i(x) + \mathcal{O}(\sigma^2),
\label{eq:p_star_expectation} \\
\mathbb{E}\left[ \sum_i p^\star_i(x) f_i(x) \right]
&= \sum_i p_i(x) f_i(x) + \mathcal{O}(\sigma^2),
\label{eq:p_star_f_expectation} \\
\mathbb{E}[H(p^\star(x))]
&= H(p(x)) + \mathcal{O}(\sigma^2).
\label{eq:entropy_expectation}
\end{align}

Next, define the log-partition function
\(Z_T(x) := \sum_{j=1}^K e^{f_j(x)/T}\).
Using a second-order Taylor expansion:
\begin{equation}
\log Z_T(x)
=
\log K
+
\frac{\mu(x)}{T}
+
\frac{1}{2T^2} \mathrm{Var}(f(x))
+
\mathcal{O}\left( \frac{1}{T^3} \right),
\label{eq:log_expand}
\end{equation}
where \(\mu(x) := \frac{1}{K} \sum_{j=1}^K f_j(x)\) and
\(\mathrm{Var}(f(x)) := \frac{1}{K} \sum_{j=1}^K f_j(x)^2 - \mu(x)^2\).

Substituting \eqref{eq:p_star_expectation}--\eqref{eq:log_expand} into \eqref{eq:kl_expand}, we obtain:
\begin{equation}
\begin{aligned}
\mathbb{E}[\mathrm{Gap}(T)]
=&\;
-\mathbb{E}[H(p^\star(x))]
-
\frac{1}{T}
\mathbb{E}\left[\sum_i p_i(x) f_i(x)\right] \\
&+
\log K
+
\frac{\mathbb{E}[\mu(x)]}{T}
+
\frac{1}{2T^2}
\mathbb{E}[\mathrm{Var}(f(x))]
+
\mathcal{O}\left( \frac{1}{T^3} \right).
\end{aligned}
\end{equation}

Define
\begin{equation}
C_1'
:=
\mathbb{E}\left[\sum_i p_i(x) f_i(x)\right]
-
\mathbb{E}[\mu(x)],
\end{equation}
and
\begin{equation}
C_2'
:=
\frac{1}{2}
\mathbb{E}[\mathrm{Var}(f(x))].
\end{equation}

Then the KL divergence admits the following upper bound:
\begin{equation}
\mathbb{E}[\mathrm{Gap}(T)]
\leq
\log K
-
\mathbb{E}[H(p^\star(x))]
-
\frac{C_1'}{T}
+
\frac{C_2'}{T^2}
+
\mathcal{O}\left( \frac{1}{T^3} \right),
\label{eq:final_bound}
\end{equation}
where \(C_1', C_2' > 0\).

\subsubsection*{Proof of \cref{cor:optimal_temperature}}
The leading terms of the bound in \eqref{eq:final_bound} define a function $g(T) := -\frac{C_1'}{T} + \frac{C_2'}{T^2}$. Setting the derivative $g'(T) = 0$ gives $\frac{C_1'}{T^2} = \frac{2C_2'}{T^3}$, implying the optimal temperature is $T^\star = \frac{2C_2'}{C_1'}$.

\subsubsection*{Proof of \cref{cor:finite_vs_infinite}}
Substituting $T^\star = \frac{2C_2'}{C_1'}$ into the dominant terms yields:
\begin{equation}
    -\frac{C_1'}{T^\star} + \frac{C_2'}{(T^\star)^2} = -\frac{(C_1')^2}{4C_2'} < 0,
\end{equation}
where the inequality follows from the positivity of $C_1'$ and $C_2'$. Thus, the KL divergence at $T = T^\star$ is strictly smaller than that at $T \to \infty$.

\subsection*{Fine-grained results}
\label{appendix:fine_grained}

In addition to comparing with methods that incorporate outliers during the training of the $K$ classifier~\cite{OE:conf/iclr/HendrycksMD19,MixOE:conf/wacv/ZhangILCL23,DOE:conf/iclr/WangY0DKLH023,DAL:conf/nips/WangFZLLH23}, we also include comparisons with approaches that train an additional binary classifier specifically for OOD detection, such as NPOS~\cite{NPOS:conf/iclr/npos}, VOS~\cite{VOS:conf/iclr/DuWCL22}.
The corresponding results are reported in \cref{tab:cifar_10_fine_grad_fpr95}, \cref{tab:cifar_10_fine_grad_auroc}, \cref{tab:cifar_100_fine_grad_fpr95}, \cref{tab:cifar_100_fine_grad_auroc}, \cref{tab:imagenet200_fine_grad_fpr95}, and~\cref{tab:imagenet200_fine_grad_auroc}.
\begin{table}[ht]
\centering
\caption{Fine-grained results (FPR95~$\downarrow$) on the CIFAR-10 benchmark.}
\label{tab:cifar_10_fine_grad_fpr95}
\resizebox{\linewidth}{!}{
\begin{tabular}{lcccccc}
\toprule
\textbf{Method} & \textbf{CIFAR100} & \textbf{TIN} & \textbf{MNIST} & \textbf{SVHN} & \textbf{Textures} & \textbf{Places365} \\
\midrule
OE     & 36.71 ± 2.06 & 2.97 ± 1.17  & 24.67 ± 2.55  & 1.25 ± 0.36  & 12.07 ± 2.14 & 14.53 ± 2.80 \\
MixOE  & 58.29 ± 8.25 & 44.62 ± 7.57 & 38.28 ± 13.40 & 20.36 ± 3.99 & 33.19 ± 4.28 & 43.54 ± 4.95 \\
DOE    & 28.27 ± 0.84 & 12.50 ± 0.95 & 35.70 ± 3.33  & 2.55 ± 0.78 & 10.35 ± 1.50 & 13.77 ± 0.47 \\
DAL    & 30.07 ± 1.56 & 11.76 ± 1.09 & 55.11 ± 8.52 & 3.24 ± 1.09 & 13.00 ± 1.13 & 14.26 ± 0.86 \\
\midrule
NPOS   & 35.71 ± 1.17 & 29.57 ± 0.24 & 22.96 ± 2.27 & 6.41 ± 0.19 & 20.80 ± 2.19 & 32.19 ± 0.98 \\
VOS    & 61.57 ± 3.24 & 52.49 ± 0.73 & 35.92 ± 11.38 & 31.50 ± 8.38 & 46.53 ± 5.24 & 47.78 ± 3.62 \\
\midrule
\textit{AOE-At}   & 33.54 ± 2.24 & 3.21 ± 1.30 & 26.15 ± 3.19 & 0.96 ± 0.26 & 9.05 ± 1.29 & 11.35 ± 1.87 \\
\textit{AOE-Jt}   & 33.60 ± 0.24 & 1.27 ± 0.16 & 21.76 ± 5.00 & 0.50 ± 0.16 & 9.48 ± 3.42 & 10.72 ± 2.39 \\
\bottomrule
\end{tabular}
}
\end{table}

\begin{table}[ht]
\centering
\renewcommand{\arraystretch}{1.3}
\caption{Fine-grained results (AUROC~$\uparrow$) on the CIFAR-10 benchmark.}
\label{tab:cifar_10_fine_grad_auroc}
\resizebox{\linewidth}{!}{
\begin{tabular}{lcccccc}
\toprule
\textbf{Method} & \textbf{CIFAR100} & \textbf{TIN} & \textbf{MNIST} & \textbf{SVHN} & \textbf{Textures} & \textbf{Places365} \\
\midrule
OE     & 90.54 ± 0.53 & 99.11 ± 0.34  & 90.22 ± 1.31  & 99.60 ± 0.14  & 97.58 ± 0.27 & 96.58 ± 0.70 \\
MixOE  & 87.47 ± 0.97 & 90.00 ± 0.73  & 91.66 ± 2.21  & 93.82 ± 1.27  & 91.84 ± 0.51 & 90.38 ± 0.55 \\
DOE    & 92.63 ± 0.21 & 97.04 ± 0.27 & 85.16 ± 2.20  & 99.20 ± 0.09 & 97.77 ± 0.31 & 96.57 ± 0.18 \\
DAL    & 91.89 ± 0.46 & 96.95 ± 0.38 & 75.35 ± 5.05  & 99.10 ± 0.19 & 97.04 ± 0.18 & 96.18 ± 0.20 \\
\midrule
NPOS   & 88.57 ± 0.36 & 90.99 ± 0.35 & 92.64 ± 1.59 & 98.88 ± 0.09 & 94.44 ± 0.90 & 90.32 ± 0.74 \\
VOS    & 86.57 ± 0.57 & 88.84 ± 0.48 & 91.56 ± 2.37 & 92.18 ± 1.65 & 89.68 ± 1.32 & 89.90 ± 0.66 \\
\midrule
\textit{AOE-At}   & 91.34 ± 0.48 & 99.08 ± 0.30 & 89.66 ± 2.02 & 99.68 ± 0.07 & 98.19 ± 0.19 & 97.46 ± 0.43\\
\textit{AOE-Jt}   & 91.68 ± 0.13 & 99.60 ± 0.07 & 92.75 ± 1.85 & 99.84 ± 0.05 & 98.07 ± 0.63 & 97.50 ± 0.74 \\
\bottomrule
\end{tabular}
}
\end{table}

\begin{table}[ht]
\centering
\renewcommand{\arraystretch}{1.3}
\caption{Fine-grained results (FPR95~$\downarrow$) on the CIFAR-100 benchmark.}
\label{tab:cifar_100_fine_grad_fpr95}
\resizebox{\linewidth}{!}{
\begin{tabular}{lcccccc}
\toprule
\textbf{Method} & \textbf{CIFAR10} & \textbf{TIN} & \textbf{MNIST} & \textbf{SVHN} & \textbf{Textures} & \textbf{Places365} \\
\midrule
OE     & 61.26 ± 0.22 & 0.21 ± 0.01  & 53.31 ± 9.91  & 51.84 ± 3.45  & 55.83 ± 1.82 & 58.30 ± 0.72 \\
MixOE  & 61.12 ± 1.08 & 49.32 ± 0.36 & 59.49 ± 7.74 & 73.09 ± 4.00 & 66.04 ± 0.98 & 56.93 ± 0.78 \\
DOE    & 63.85 ± 0.50 & 11.83 ± 1.60 & 57.97 ± 4.42  & 20.27 ± 1.01 & 50.29 ± 2.73 & 52.97 ± 1.56 \\
DAL    & 65.70 ± 0.96 & 4.86 ± 1.57 & 65.29 ± 4.51 & 12.41 ± 2.75 & 55.04 ± 1.44 & 56.59 ± 1.70 \\
\midrule
NPOS   & 72.50 ± 2.50 & 54.21 ± 1.09 & 66.98 ± 4.61 & 30.67 ± 0.74 & 47.39 ± 1.10 & 59.47 ± 0.21 \\
VOS    & 59.23 ± 0.59 & 51.89 ± 1.01 & 48.56 ± 2.00 & 47.23 ± 3.04 & 62.55 ± 1.04 & 56.44 ± 0.36 \\
\midrule
\textit{AOE-At}   & 60.74 ± 0.22 & 0.38 ± 0.13 & 46.91 ± 3.26 & 27.25 ± 13.06 & 52.17 ± 1.46 & 56.53 ± 1.12 \\
\textit{AOE-Jt}   & 59.91 ± 0.47 & 0.27 ± 0.05 & 51.00 ± 5.67 & 24.72 ± 9.80 & 52.02 ± 2.14 & 56.74 ± 0.38 \\
\bottomrule
\end{tabular}
}
\end{table}

\begin{table}[ht]
\centering
\renewcommand{\arraystretch}{1.3}
\caption{Fine-grained results (AUROC~$\uparrow$) on the CIFAR-100 benchmark.}
\label{tab:cifar_100_fine_grad_auroc}
\resizebox{\linewidth}{!}{
\begin{tabular}{lcccccc}
\toprule
\textbf{Method} & \textbf{CIFAR10} & \textbf{TIN} & \textbf{MNIST} & \textbf{SVHN} & \textbf{Textures} & \textbf{Places365} \\
\midrule
OE     & 76.70 ± 0.19 & 99.89 ± 0.02  & 80.68 ± 5.82  & 84.37 ± 1.34  & 82.18 ± 0.68 & 78.39 ± 0.41 \\
MixOE  & 78.17 ± 0.29 & 83.73 ± 0.12  & 76.06 ± 5.52  & 72.28 ± 0.81  & 77.34 ± 0.91 & 79.92 ± 0.30 \\
DOE    & 75.47 ± 0.55 & 97.76 ± 0.28  & 72.54 ± 4.38  & 95.73 ± 0.42  & 86.34 ± 1.28 & 82.58 ± 0.91 \\
DAL    & 73.08 ± 0.19 & 98.86 ± 0.32  & 67.91 ± 2.00  & 97.12 ± 0.56  & 84.04 ± 0.62 & 80.60 ± 0.94 \\
\midrule
NPOS   & 75.37 ± 0.58 & 81.32 ± 0.17 & 73.26 ± 5.32 & 92.43 ± 0.29 & 85.55 ± 0.40 & 77.92 ± 0.37 \\
VOS    & 79.14 ± 0.41 & 82.73 ± 0.20 & 82.29 ± 1.51 & 84.23 ± 1.33 & 78.41 ± 0.78 & 80.34 ± 0.03 \\
\midrule
\textit{AOE-At}   & 76.67 ± 0.12 & 99.81 ± 0.04 & 85.52 ± 2.32 & 93.82 ± 3.59 & 84.65 ± 1.31 & 79.88 ± 0.97 \\
\textit{AOE-Jt}   & 76.76 ± 0.22 & 99.88 ± 0.01 & 84.41 ± 2.69 & 95.76 ± 1.84 & 84.29 ± 0.71 & 79.70 ± 0.25 \\
\bottomrule
\end{tabular}
}
\end{table}

\begin{table}[!h]
\centering
\renewcommand{\arraystretch}{1.3}
\caption{Fine-grained results (FPR95~$\downarrow$) on the ImageNet-200 benchmark.}
\label{tab:imagenet200_fine_grad_fpr95}
\resizebox{\linewidth}{!}{
\begin{tabular}{lcccccc}
\toprule
\textbf{Method} & \textbf{SSB-hard} & \textbf{NINCO} & \textbf{iNaturalist} & \textbf{Textures} & \textbf{OpenImage-O} \\
\midrule
OE     & 64.20 ± 0.45 & 40.90 ± 0.66  & 28.93 ± 0.58  & 41.82 ± 1.09  & 34.78 ± 0.24 \\
MixOE  & 67.94 ± 0.43 & 47.95 ± 0.63  & 29.97 ± 0.20  & 50.23 ± 1.21  & 40.17 ± 0.67 \\
DOE    & 63.70 ± 0.04 & 44.59 ± 0.99  & 29.13 ± 3.75  & 44.87 ± 1.93  & 38.81 ± 3.35 \\
DAL    & 61.68 ± 0.12 & 42.03 ± 0.71  & 27.58 ± 0.45  & 44.84 ± 0.40  & 34.86 ± 0.31 \\
\midrule
NPOS   & 74.29 ± 0.52 & 49.89 ± 0.49  & 20.01 ± 0.69  & 16.87 ± 0.19  & 28.40 ± 0.28 \\
VOS    & 69.93 ± 0.47 & 49.85 ± 0.71  & 25.53 ± 1.36  & 39.74 ± 1.17  & 36.77 ± 0.94  \\
\midrule
\textit{AOE-At}   & 62.70 ± 0.13 & 38.36 ± 0.03  & 28.58 ± 0.74 & 41.97 ± 0.37 & 33.27 ± 0.22 \\
\textit{AOE-Jt}   & 62.62 ± 0.41 & 39.04 ± 0.44  & 26.61 ± 0.85 & 40.57 ± 0.39 & 32.06 ± 0.61 \\
\bottomrule
\end{tabular}
}
\end{table}

\begin{table}[!h]
\renewcommand{\arraystretch}{1.3}
\centering
\caption{Fine-grained results (AUROC~$\uparrow$) on the ImageNet-200 benchmark.}
\label{tab:imagenet200_fine_grad_auroc}
\resizebox{\linewidth}{!}{
\begin{tabular}{lcccccc}
\toprule
\textbf{Method} & \textbf{SSB-hard} & \textbf{NINCO} & \textbf{iNaturalist} & \textbf{Textures} & \textbf{OpenImage-O} \\
\midrule
OE     & 82.14 ± 0.17 & 86.89 ± 0.25  & 89.08 ± 0.18  & 87.33 ± 0.22  & 88.22 ± 0.19 \\
MixOE  & 80.15 ± 0.21 & 84.99 ± 0.28  & 90.94 ± 0.17  & 87.02 ± 0.22  &  87.22 ± 0.04 \\
DOE    &  80.64 ± 0.71 &  85.81 ± 0.54  & 90.64 ± 1.21  & 87.04 ± 1.74  &  87.05 ± 1.41 \\
DAL    & 83.34 ± 0.12 & 86.99 ± 0.04  & 90.00 ± 0.12  & 87.18 ± 0.19  & 88.46 ± 0.04 \\
\midrule
NPOS   & 74.29 ± 0.42 & 84.50 ± 0.40  & 94.81 ± 0.15  & 96.97 ± 0.04  & 91.69 ± 0.04 \\
VOS    & 79.68 ± 0.19 & 85.35 ± 0.10  & 92.77 ± 0.54  & 90.95 ± 0.20  & 89.28 ± 0.15  \\
\midrule
\textit{AOE-At}   & 83.46 ± 0.02 & 87.77 ± 0.15  & 89.82 ± 0.37 & 87.83 ± 0.02 & 89.19 ± 0.06 \\
\textit{AOE-Jt}   & 83.49 ± 0.15 & 87.65 ± 0.02  & 90.36 ± 0.30 & 88.15 ± 0.07 & 89.21 ± 0.04 \\
\bottomrule
\end{tabular}
}
\end{table}

\bibliographystyle{fcs}
\bibliography{main}

\begin{biography}{wfq_24}
Fengqiang Wan is currently working toward the Ph.D. degree at the School of Computer Science and Engineering, Nanjing University of Science and Technology. His research interests mainly lie in deep learning and data mining.
\end{biography}

\vspace{10pt}

\begin{biography}{jiangqy}
Qingyuan Jiang received the BSc and the PhD degrees in computer science from Nanjing University, China. He has published more than ten papers in leading international journals/conferences. He serves as a PC/Reviewer in leading conferences such as IEEE Transactions on Pattern Analysis and Machine Intelligence~(TPAMI), IEEE Transactions on Image Processing~(TIP), NeurIPS, ICML, ICLR, AISTATS, AAAI, IJCAI, etc. He is currently an associate professor at Nanjing University of Science and Technology. His research interests are in machine learning, multimodal learning, and information retrieval.
\end{biography}

\vspace{10pt}

\begin{biography}{FuShen}
Fu Shen received the MSc degrees in computer science from Southeast University, China. He has published more than ten papers in SCI and Chinese core journals. He serves as an associate researcher at the Nanjing Institute of Agricultural Mechanization,Ministry of Agriculture and Rural Affairs. His research interests encompass agricultural big data model analysis, decision-making of farmland environmental data, and autonomous homework navigation trajectory algorithm.
\end{biography}

\vspace{10pt}

\begin{biography}{yangy}
Yang Yang received his Ph.D. in Computer Science from Nanjing University, China, in 2019. In the same year, he joined Nanjing University of Science and Technology, where he is currently a Professor in the School of Computer Science and Engineering. His research focuses on machine learning and data mining, with particular interests in heterogeneous learning, model reuse, and incremental mining. He has published extensively in leading journals and conferences, including TKDE, ACM TOIS, TKDD, SIGKDD, SIGIR, WWW, IJCAI, and AAAI. He received the Best Paper Award at ACML 2017. He also serves as a program committee member for major conferences such as IJCAI, AAAI, ICML, and NeurIPS.
\end{biography}

\end{document}